\renewcommand{\fnum@figure}{Fig. \thefigure}
\newtheoremstyle{itshape}
  {.0\baselineskip\@plus.0\baselineskip\@minus.0\baselineskip}
  {.0\baselineskip\@plus.0\baselineskip\@minus.0\baselineskip}
  {\itshape}
  {}
  {\bfseries}
  {.}
  { }
  {}
\theoremstyle{itshape}
\newtheorem{theorem}{Theorem}
\newtheorem{lemma}{Lemma}
\newtheorem{assumption}{Assumption}
\newtheorem{definition}{Definition}
\begin{document}
\include{header}
\title{Asynchronous Wireless Federated Learning with Probabilistic Client Selection}

\author{Jiarong Yang, Yuan Liu, Fangjiong Chen, Wen Chen, and Changle Li

\thanks{
This paper was supported in part by the National Key Research and Development Program of China under Grant 2020YFB1807700, in part by the National Natural Science Foundation of China under Grant 61971196, U2001210, and 62071296, and in part by Shanghai 22JC1404000, 20JC1416502, and PKX2021-D02. \emph{Corresponding authors: Yuan Liu  and Fangjiong Chen.} 
\par
Jiarong Yang, Yuan Liu and Fangjiong Chen are with school of Electronic and Information Engineering, South China University of Technology, Guangzhou 510641, China (e-mails: eejryang@mail.scut.edu.cn, eeyliu@scut.edu.cn, eefjchen@scut.edu.cn). 
\par
Wen Chen is with Department of Electronic Engineering, Shanghai Jiao Tong University, Shanghai 200240, China (e-mail: wenchen@sjtu.edu.cn).
\par
Changle Li is with the State Key Laboratory of Integrated Services Networks and Research Institute of Smart Transportation, Xidian University, Xi’an 710071, Shaanxi, China (e-mail: clli@mail.xidian.edu.cn).
}
}

\maketitle

\vspace{-1.5cm}
\begin{abstract}
Federated learning (FL) is a promising distributed learning framework where distributed clients collaboratively train a machine learning model coordinated by a server. To tackle the stragglers issue in asynchronous FL, we consider that each client keeps local updates and probabilistically transmits the local model to the server at arbitrary times. We first derive the (approximate) expression for the convergence rate based on the probabilistic client selection. Then, an optimization problem is formulated to trade off the convergence rate of asynchronous FL and mobile energy consumption by joint probabilistic client selection and bandwidth allocation. We develop an iterative algorithm to solve the non-convex problem globally optimally. Experiments demonstrate the superiority of the proposed approach compared with the traditional schemes.
\end{abstract}

\begin{IEEEkeywords}
Asynchronous federated learning, stragglers, wireless networks.
\end{IEEEkeywords}

\section{Introduction}
\label{Introduction}
The burgeoning of the Internet of Things (IoT) and the fifth generation mobile communication (5G) technology are driving the era of the Internet of Everything (IoE). Smartphones, wearable devices, smart homes, and various distributed wireless sensors generate massive amounts of data in real time \cite{shi2017edge, 9786891, 8166725}, which can be used by machine learning techniques to bring diverse artificial intelligence (AI) services to people \cite{lecun2015deep, 8770530, 8970161}. However, traditional machine learning is based on centralized training that requires sending all raw data to servers \cite{9098940, 9847105}, which can drain the wireless bandwidth and cause long communication delays. In addition, with the increasing public concern about data security and privacy issues, people or companies are not willing to share their private data, which leads to the problem of data islands \cite{chen2012data}. To address the aforementioned issues, a promising distributed machine learning framework called federated learning (FL) has been proposed \cite{mcmahan2017communication, 10.1145/3298981}. A typical FL consists of a server and multiple clients with computation capabilities, in which the server only aggregates local models trained at the client side to obtain the global model. By decentralizing model training to the client side, FL not only preserves the data privacy, but also uses the computation capabilities of clients \cite{lim2020federated}. 
\par
FL always assumes synchronous client-server communication, where the server waits to receive all local models of clients for model aggregation. However, with limited bandwidth (computation) resources, some clients may suffer from high communication (computation) latency and thus slow down convergence rate of the whole system \cite{avdiukhin2021federated}. This is known as \emph{stragglers}. Asynchronous FL can naturally solve the stragglers issue, where the server aggregates local models without waiting for the delayed clients \cite{xie2019asynchronous, chen2019communication, anonymous2020efficient, chen2020asynchronous, avdiukhin2021federated, lee2021adaptive, ma2021fedsa, zhu2022online, wang2022asynchronous, you2022triple}. However, discarding stragglers in asynchronous FL \cite{you2022triple, wang2022asynchronous} will cause serious performance degradation. Specifically, on the one hand, if stragglers do not receive the global model broadcast by the server due to network failure (e.g., channel fading), they cannot perform local training and communicate with the server. Thus stragglers not only violate participation fairness but also may drift the global model towards local optimum \cite{huang2022stochastic}. On the other hand, the gradients of stragglers based on the stale global model may also degrade the model performance \cite{chen2020asynchronous, you2022triple}. To solve this problem, the authors in \cite{avdiukhin2021federated} propose a scheme that clients can keep local model training based on the last received global model and send the updated local model to the server at arbitrary time.  In this paper, we consider this asynchronous FL scenario in wireless networks.
\par
As the synchronous case, the asynchronous FL also faces energy-efficiency problem in wireless networks, as mobile clients are usually with limited battery capacity while the transmission of local models is significant energy consuming. In addition, over-participation of some clients may hurt the model performance, i.e., too much participation of a certain group of clients may “drift” the global model to local optimum \cite{bian2022mobility, huang2022stochastic, cho2020client}. On the other hand, increasing the client participants could improve the model convergence \cite{mcmahan2017communication}. This problem is known as the \emph{client selection} that is used as an important method to deal with energy efficiency \cite{xu2020client, sun2021dynamic} and data heterogeneity \cite{yang2022client}, i.e., selecting the clients with the most informativeness and/or the least communication overhead to participate in training. Note that client selection is more difficult in considered asynchronous FL with continuous local training and arbitrary communication. Specifically, in traditional (both synchronous and asynchronous) FL, client selection is performed regularly at the beginning of each round, and clients are selected to participate in training in a deterministic way, i.e., the server gathers the network information to calculate the selection or scheduling policy and then selects clients centrally. Whereas, in considered asynchronous FL,  where clients can send their updated local models to server at arbitrary times, every client can be selected by a probability at each time. In other words, the server only calculates the selection probability for clients, and each client autonomously decides whether to send updated local model to server based on its selection probability. This refers to as \emph{probabilistic client selection} in this paper. However, how to use probabilistic client selection to handle energy efficiency and data heterogeneity has no attention. 
\subsection{Contributions}
In this paper, we study the stragglers issue of asynchronous FL by joint probabilistic client selection and bandwidth allocation, where the clients keep local updates and exchange models with the server at arbitrary times. The main contributions of this paper are summarized as follows:
\begin{itemize}
\item We analyze the convergence rate of the asynchronous FL with probabilistic client selection, which exhibits the relationship between the training performance and participation probability of the clients. In particular, we consider an individual interval that each client must communicate with the server at least once, so as to adapt channel diversity and participation fairness.
\item We formulate a joint optimization problem of probabilistic client selection and bandwidth allocation to trade off the convergence performance of the asynchronous FL and energy consumption of the clients. We use the sum-of-ratios algorithm to transform the original non-convex problem into a convex subtractive form and develop an iterative algorithm to find the globally optimal solutions.
\item Several useful insights are obtained via our analysis: First, more frequent client-server communication can accelerate the model convergence. Second, fair participation of clients can also improve the model performance. Third, more clients benefit model performance, but if the number of clients exceed a certain threshold, the model performance degrades due to data heterogeneity. Fourth, the proposed scheme not only enhances both model performance and energy efficiency, but also improves fairness of energy consumption among clients.
\end{itemize}
\subsection{Prior Work}
Wireless resource allocation and client selection for synchronous FL has been extensively studied \cite{shi2020joint, xu2020client, sun2021dynamic, wang2019adaptive, chen2020joint, amiri2021convergence, yang2020age, zhu2020one, 9760232, 9685090}, with the goal of achieving higher model accuracy or lower energy/latency cost with limited resources. For example, the authors in \cite{shi2020joint} consider joint client selection and bandwidth allocation to maximize the model accuracy obtained in a limited training time. The authors in \cite{xu2020client} show that selecting more clients at the end of training phase is beneficial for model accuracy improvement, and then propose an energy-aware client selection and bandwidth allocation algorithm. The authors in \cite{sun2021dynamic} consider an over-the-air FL with dynamic client selection to minimize the convergence bound while meeting energy constraints. In \cite{wang2019adaptive}, the maximum model accuracy is achieved by weighting local updates for global parameter aggregation under a given resource budget. In \cite{yang2020age}, an age-based selection policy is proposed to jointly consider the staleness of the received parameters and the instantaneous channel qualities. Furthermore, client selection based on data importance has been employed to optimize the performance of FL in wireless networks \cite{9107235, 9170917, 9252927}. Specifically, the importance metric of data in \cite{9107235} incorporates both the signal-to-noise ratio (SNR) and data uncertainty. \cite{9170917} utilizes gradient divergence to quantify the importance of each local update. And in \cite{9252927}, the importance of data is quantified by calculating the inner product between each local gradient and the ground-truth global gradient. In addition, client selection with the consideration of fairness is studied in \cite{huang2022stochastic, li2021federated}, where the probability of selecting each client is ensured by introducing a fairness constraint.
\par

Asynchronous FL has been extensively studied to address the stragglers issue. In \cite{xie2019asynchronous} and \cite{chen2019communication} the local updates uploaded by clients are adaptively weighted based on their staleness, aiming to alleviate the impact of stragglers on model performance. In \cite{anonymous2020efficient}, the number of local epochs is dynamically adjusted with the estimation of staleness to improve the model convergence. In \cite{chen2020asynchronous}, a dynamic learning strategy for step size adaptation on local training is proposed to reduce the impact of staleness caused by stragglers. In \cite{avdiukhin2021federated}, an asynchronous version of local stochastic gradient descent (SGD) is studied, wherein the clients can communicate with the server at arbitrary time intervals.
The above asynchronous FL works do not consider wireless networks.
\par
There are some researches on asynchronous FL over wireless networks \cite{lee2021adaptive, ma2021fedsa, wang2022asynchronous, zhu2022online}.
For example, The authors in \cite{lee2021adaptive} propose three client selection strategies that maximize the expected number of effective training samples. In \cite{ma2021fedsa}, a semi-asynchronous FL is proposed to minimize the training completion time, where the server aggregates local models based on their arrival order.
In \cite{zhu2022online}, an adaptive client selection algorithm is proposed to minimize training latency while taking into account the client availability and long-term fairness.
The staleness of local models and the delays of local model training and uploading are considered in \cite{wang2022asynchronous}.  
However, the works \cite{lee2021adaptive, ma2021fedsa, zhu2022online} assume that the fixed number of clients are scheduled for aggregation. And in \cite{wang2022asynchronous}, stragglers are discarded if they cannot transmit within a period. Unlike \cite{lee2021adaptive, ma2021fedsa, wang2022asynchronous, zhu2022online}, in our paper, clients probabilistically participate in the training and independently decide whether to upload their local updates.
\par
It is necessary to compare our paper with the work \cite{avdiukhin2021federated} which considers the same asynchronous FL scenario of continuous local updates and arbitrary communication. First, the work \cite{avdiukhin2021federated} focuses on convergence analysis based on the assumption that all clients communicate with the server at the same frequency, while our paper gets away from this impractical assumption and obtains the new results. Second, our paper focuses on client selection and bandwidth allocation, which are not considered in \cite{avdiukhin2021federated}. To sum up, our paper and \cite{avdiukhin2021federated} consider the same asynchronous FL scenario but different assumptions and focuses, thus the two works are fundamentally different.
\par
The rest of this paper is organized as follows. In Section \ref{sec:system_model_and_problem_formulation}, we present the system model. We analyze the convergence rate and formulate the optimization problem in Section \ref{Convergence_Analysis_and_Problem_Formulation}. An optimal algorithm to solve the formulated problem is proposed in Section \ref{sec:Joint_Sending_Probability_Optimization_and_Bandwidth_Allocation}. Finally, the experiment results and conclusions are provided in Section \ref{sec:Experimental_Results} and Section \ref{sec:Conclusion}, respectively.

\section {System Model}
\label{sec:system_model_and_problem_formulation}
In this section, we describe the system model for asynchronous FL over wireless networks. The main notations used in this paper are summarized in Table \ref{table0}.

\begin{table}[]
\centering
\caption{Key notations.}
\label{table0}
\begin{tabular}{cl}
\hline
\multicolumn{1}{l}{\textbf{Symbols}}                             & \multicolumn{1}{l}{\textbf{Definitions}}                                                       \\ \hline
$\boldsymbol{x}_t$                                               & Global model parameters at round $t$                                \\
$\boldsymbol{x}_{k,t}$                                           & Model parameters of client $k$ at round $t$                         \\
$\boldsymbol{y}_{k,t}$                                           & Global model parameters received by the client $k$ before \\
\multicolumn{1}{l}{}                                             & round $t$                                                           \\
$\boldsymbol{\delta}_{k,t}$                                      & Actual gradient descent direction of client $k$ at round $t$        \\
$f_t$                                                            & Global loss function at round $t$                                   \\
$f_{k,t}$                                                        & Local loss function of client $k$ at round $t$                      \\
$C_t$                                                            & Set of clients communicating with the server at round $t$           \\
$\Delta_k$                                                       & Maximum communication interval between client k and                 \\
\multicolumn{1}{l}{}                                             & the server                                                          \\
$w_{k,t}$                                                        & Bandwidth allocation ratio for the channel of client $k$ at         \\
\multicolumn{1}{l}{}                                             & round $t$                                                           \\
$P_k$                                                            & Transmit power of client $k$                                        \\
$p_{k,t}$                                                        & Probability of client $k$ communicating with the server at          \\
\multicolumn{1}{l}{}                                             & round $t$                                                           \\
$E_t$                                                            & Total expected energy consumption (Joule) at round $t$              \\
$\nabla F_{k,t}$                                                 & Unbiased stochastic gradient of $f_k$ at $\boldsymbol{x}_{k,t}$     \\
$\mathbf{G}_{max}$                                               & Maximum stochastic gradient norm                                    \\
$\sigma^2$                                                       & Variance of stochastic gradients                                    \\
$\boldsymbol{x}^*$                                               & Optimal global model parameters                                     \\
$f_{max}$                                                        & Difference between the initial global loss function and the         \\
\multicolumn{1}{l}{}                                             & optimal global loss function                                        \\
$L$                                                              & Smoothness parameter of $f_k$                                       \\
$\lambda$                                                        & Minimum client selection probability                                \\
$\rho$                                                           & Tradeoff coefficient for bi-objective optimization problem          \\
$(\boldsymbol{\alpha}, \boldsymbol{\beta}, \boldsymbol{\gamma})$ & Auxiliary variables introduced in fractional programming            \\ 
\multicolumn{1}{l}{}                                             & transform     \\\hline                                                     
\end{tabular}
\end{table}

\subsection{Asynchronous FL Model}

We consider an asynchronous FL system comprising a server and $K$ clients indexed by $\mathcal{K}=\{1,2,\cdots,K\}$. According to the FL framework, a global statistical model is trained in a distributed manner to minimize a global loss function as follows:
\begin{align}
    \min\limits_{\boldsymbol{x}} f(\boldsymbol{x})=\frac1K\sum_{k=1}^K f_k(\boldsymbol{x}),
\end{align}
where $\boldsymbol{x}$ denotes the global model parameters and $f_k$ is the local loss function based on the data set on client $k$.
We consider an asynchronous FL scenario, in which all clients update the local models at all time but send the latest gradients to the server at arbitrary times. 
Specifically, for client $k$ at round $t$, we use $\boldsymbol{x}_{k,t}$ to denote the local model parameters and $\boldsymbol{y}_{k,t}$ to denote the global model parameters received by client $k$ before round $t$. 
Then the actual gradient descent direction of client $k$ at round $t$ is defined as a pseudo-gradient \cite{charles2021large, reddi2020adaptive} as follows
\begin{align}\label{eqn:local_update}
\boldsymbol{\delta}_{k,t} = \boldsymbol{x}_{k,t} - \boldsymbol{y}_{k, t-1}.
\end{align}
\par
The server receives the gradients sent by the clients and updates the global parameters as follows:
\begin{align}\label{eqn:global_update}
\boldsymbol{x}_t = \boldsymbol{x}_{t-1} + \frac{1}{K}\sum_{k\in C_t}\boldsymbol{\delta}_{k,t},
\end{align}
where $C_t$ denotes the set of clients who communicate with (or send gradients to) the server at round $t$. After updating the global model, the sever broadcasts the updated parameters to the clients in $C_t$.
\par
Note that in traditional synchronous FL, only the selected clients perform local training at each round, whereas in considered asynchronous FL, all clients keep local training and send their local pseudo-gradients to the server at arbitrary times. To overcome the stragglers issue as mentioned in Section \ref{Introduction}, we restrict that each client $k$ must communicate with the server at least once within period $\Delta_k$. By setting a reasonable $\Delta_k$ for each client, it ensures that each client can transmit its local pseudo-gradient to the server at the appropriate time.

\subsection{Communication Model}
We consider orthogonal uplink channel access with total bandwidth $W$, where the bandwidth allocated to client $k$ at round $t$ is $w_{k,t}W$ and $w_{k,t}$ is the bandwidth allocation ratio, satisfying $0\leq w_{k,t}\leq 1$ and $\sum_{k=1}^Kw_{k,t}\leq 1$ for each round $t$. Denote $P_k$ as the transmit power of client $k$, $h_{k,t}$ as the uplink channel gain at round $t$, and $N_0$ as the noise power density. Thus, the achievable transmission rate (bits/s) of client $k$ at round $t$ is given as
\begin{align}\label{eqn:trans_rate}
R_{k,t} = w_{k,t} W\log\left(1+\frac{P_k h_{k,t}}{w_{k,t} W N_0}\right).
\end{align}
\par
Define $p_{k,t}$ as the probability of client $k$ communicating with the server at round $t$, we can obtain the total expected energy consumption (Joule) at round $t$ as 
\begin{align}\label{eqn:ener_total_comsumption}
E_t = \sum_{k=1}^K \frac{p_{k,t} P_k S}{R_{k,t}},
\end{align}
where $S$ is the size (in bits) of model parameters that are the same for all clients. Note that here we only consider the communication/transmission energy, but the energy consumption of local model training, i.e., computing energy,  can be easily involved since it is a constant during the training process.

\subsection{Protocol of Asynchronous FL}
In asynchronous FL networks, the server optimizes the client selection probability $\{p_{k,t}\}_{k=1}^K$ and the bandwidth allocation ratio $\{w_{k,t}\}_{k=1}^K$. 
At each round, the server broadcasts the client selection probability $\{p_{k,t}\}_{k=1}^K$ to all clients. Clients then decide whether to send their gradients based on $\{p_{k,t}\}_{k=1}^K$, where clients who decide to communicate with the server transmit local pseudo-gradients on allocated channels with the bandwidth allocation ratio $\{w_{k,t}\}_{k=1}^K$.
As illustrated in Fig. \ref{fig:system}, the specific steps of protocol can be summarized as follows:
\begin{itemize}
\item{\bf Step 1:} All clients compute the local pseudo-gradients as \eqref{eqn:local_update}. 
\item{\bf Step 2:} The server calculates the client selection probability $\{p_{k,t}\}_{k=1}^K$ and the bandwidth allocation ratio $\{w_{k,t}\}_{k=1}^K$, and broadcasts the client selection probability $\{p_{k,t}\}_{k=1}^K$ to all clients.
\item{\bf Step 3:} The clients decide whether to send local pseudo-gradients based on the client selection probability $\{p_{k,t}\}_{k=1}^K$.
\item{\bf Step 4:} The client $k\in C_t$ sends the local pseudo-gradients to the server via a sub-channel with the bandwidth allocation ratio $\{w_{k,t}\}_{k=1}^K$.
\item{\bf Step 5:} The server updates the global model parameters as \eqref{eqn:global_update} and broadcasts the updated model to the clients $k\in C_t$.
\end{itemize}

\begin{figure}[t]
\begin{centering}
\includegraphics[width=8.8cm]{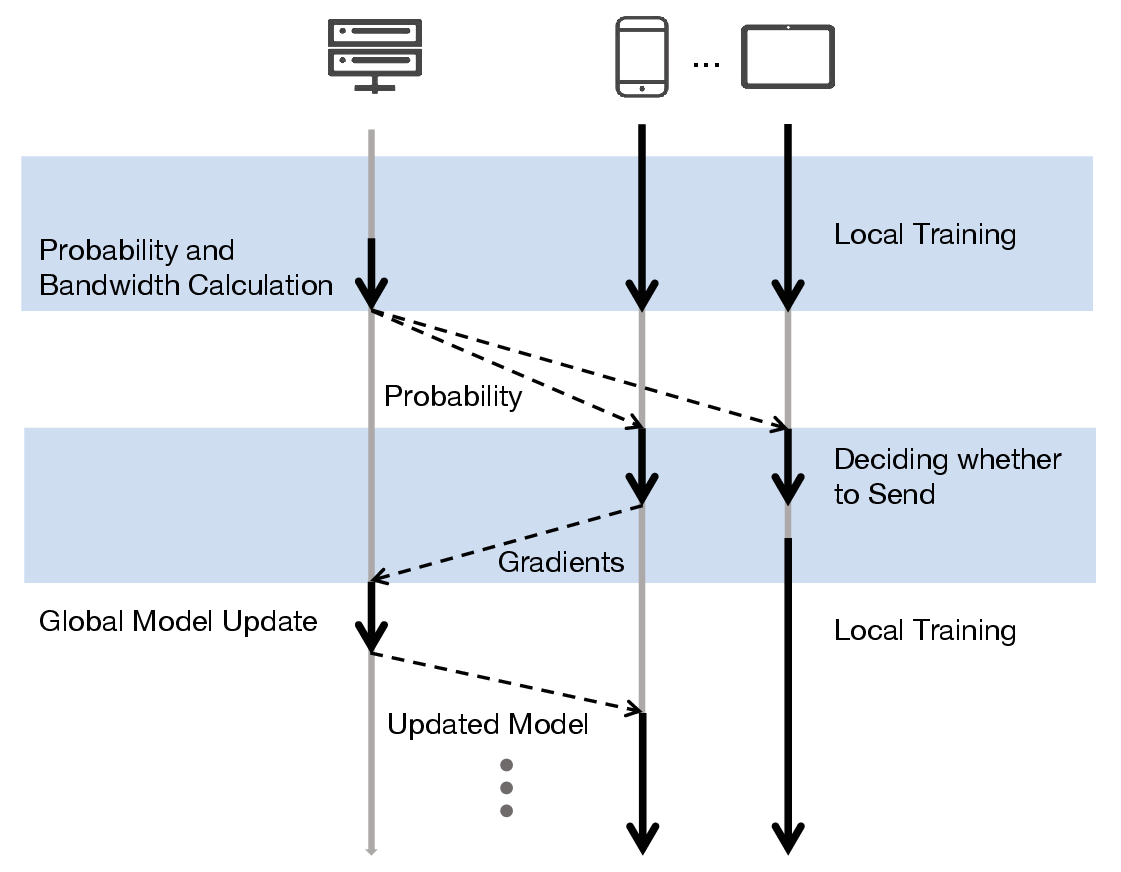}
\caption{Communication protocol for asynchronous FL.}\label{fig:system}
\end{centering}
\end{figure}

\section{Convergence Analysis and Problem Formulation}
\label{Convergence_Analysis_and_Problem_Formulation}
In this section, we first analyze the convergence rate of the proposed asynchronous FL scheme and derive an approximate expression with respect to the client selection probability. Then a joint optimization problem of probabilistic client selection and bandwidth allocation is presented to trade off convergence rate and energy consumption for the asynchronous FL. 
\subsection{Convergence Analysis}
In this subsection, we analyze the convergence rate of the proposed asynchronous FL scheme, which is the basis of the problem formulation. Firstly, we introduce the maximum communication interval $\Delta_k$ between each client $k$ and the server and then derive the convergence rate. Then we approximate the $\Delta_k$ in terms of the client selection probability $\{p_{k,t}\}_{t=0}^{T-1}$ and reveal the relationship between $\{p_{k,t}\}_{t=0}^{T-1}$ and the convergence rate of the model.
\par
Before the convergence analysis, we first introduce the following assumptions.
\begin{assumption}\label{assumption1}
Denote $\nabla F_{k,t}$ as a gradient computed by client $k$ using SGD at round $t$ and it is also an unbiased stochastic gradient of $f_k$ at $\boldsymbol{x}_{k,t}$, satisfying $\mathbb{E}[\nabla F_{k,t}] = \nabla f_k(\boldsymbol{x}_{k,t})$. We assume that
\begin{itemize}
\item Each local loss function $f_k(\boldsymbol{x})$ is L-smooth, i.e., $\|f_k(\boldsymbol{x})-f_k(\boldsymbol{x}')\| \leq L\|\boldsymbol{x}-\boldsymbol{x}'\|$, for any $\boldsymbol{x}$ and $\boldsymbol{x}'$.
\item For any $k$ and $\boldsymbol{x}$, the norm of $\nabla F_{k,t}$ can be bounded by $\mathbb{E}[\|\nabla F_{k,t}\|]^2 \leq \mathbf{G}_{max}^2$.
\item For any $k$, the difference between $\nabla F_{k,t}$ and $\nabla f_k(\boldsymbol{x}_{k,t})$ can be bounded by $\mathbb{E}[\|\nabla F_{k,t}-\nabla f_k(\boldsymbol{x}_{k,t})\|^2] \leq \sigma^2 $.
\end{itemize}
Here $\mathbb{E}[\cdot]$ is the expectation of the randomness results from stochastic gradients. The first assumption is widely used in in the literature of convergence analysis for FL \cite{shi2020joint,wang2019adaptive,yang2019scheduling}, while the second and the third assumptions are standard assumptions in the SGD literature \cite{stich2018local,yu2019parallel}.
\end{assumption}
\par
By assigning each client an individual maximum communication interval $\Delta_k$, the convergence rate of asynchronous FL is shown in the following Lemma.
\begin{lemma}\label{lemma1}
When Assumptions \ref{assumption1} holds, given the maximum communication interval $\Delta_k, k \in \{1,2,\cdots,K\}$ and $\eta \leq \frac{1}{8L}$, the convergence rate of asynchronous FL is given by
\begin{align}
\frac{1}{T}\sum_{t=0}^{T-1}\mathbb{E}&[\|\nabla f(\boldsymbol{x}_t)\|^2]  \nonumber\\
&\leq \frac{8f_{max}}{\eta T} + 92\eta^2 L^2 \mathbf{G}_{max}^2 (\frac{\sum_{k=1}^K\Delta_k^2}{K}) + 9\sigma^2,
\end{align}
where $L$, $\mathbf{G}_{max}$, and $\sigma$ are constants in Assumption \ref{assumption1}, $\eta$ is the learning rate, $f_{max} = f(\boldsymbol{x}_0)-f(\boldsymbol{x}^*)$ and $\boldsymbol{x}^*$ is the optimal global model parameters.
\end{lemma}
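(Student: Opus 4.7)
The plan is to anchor the argument on the $L$-smooth descent inequality for the aggregate objective, which holds because an average of $L$-smooth functions is $L$-smooth:
\begin{align*}
f(\boldsymbol{x}_{t+1}) \le f(\boldsymbol{x}_t) &+ \langle \nabla f(\boldsymbol{x}_t),\,\boldsymbol{x}_{t+1}-\boldsymbol{x}_t\rangle \\
&+ \tfrac{L}{2}\|\boldsymbol{x}_{t+1}-\boldsymbol{x}_t\|^2.
\end{align*}
Writing $\boldsymbol{x}_{t+1}-\boldsymbol{x}_t=\tfrac{1}{K}\sum_{k\in C_{t+1}}\boldsymbol{\delta}_{k,t+1}$ and unrolling the local SGD recursion $\boldsymbol{x}_{k,t+1}=\boldsymbol{x}_{k,t}-\eta\nabla F_{k,t}$ turns each pseudo-gradient into $\boldsymbol{\delta}_{k,t+1}=-\eta\sum_{s=\tau_k(t)}^{t}\nabla F_{k,s}$, where $\tau_k(t)$ is the round at which client $k$ last received a broadcast before round $t+1$. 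The maximum-interval assumption supplies the deterministic bound $t+1-\tau_k(t)\le\Delta_k$, and this is the only property of the random selection used here.

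The heart of the argument is to decompose the inner product as
\begin{align*}
\langle \nabla f(\boldsymbol{x}_t),\,\boldsymbol{x}_{t+1}-\boldsymbol{x}_t\rangle &= -\eta\|\nabla f(\boldsymbol{x}_t)\|^2 \\
&\quad + (\textrm{SGD noise}) + (\textrm{staleness drift})
\end{align*}
by adding and subtracting $\nabla f_k(\boldsymbol{y}_{k,\tau_k(t)})$ and $\nabla f_k(\boldsymbol{x}_t)$ in each summand. Conditional expectation on the history through round $t$ eliminates the noise piece through unbiasedness of $\nabla F_{k,s}$, while the drift is controlled as $L\|\boldsymbol{x}_t-\boldsymbol{y}_{k,\tau_k(t)}\|\le \eta L\Delta_k\mathbf{G}_{max}$ using the uniform gradient bound together with the interval constraint. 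The curvature term $\tfrac{L}{2}\|\boldsymbol{x}_{t+1}-\boldsymbol{x}_t\|^2$ is expanded in the same spirit, yielding an $\eta^2L^2\mathbf{G}_{max}^2$-scaled staleness contribution and a variance piece from the SGD-noise independence across $s$ and $k$.

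Combining these estimates with Young's inequality to split cross terms, and invoking $\eta\le 1/(8L)$ to absorb the residual curvature into $-\tfrac{\eta}{8}\|\nabla f(\boldsymbol{x}_t)\|^2$, produces a one-step recursion of the form
\begin{align*}
\mathbb{E}[f(\boldsymbol{x}_{t+1})] \le \mathbb{E}[f(\boldsymbol{x}_t)] &- \tfrac{\eta}{8}\mathbb{E}[\|\nabla f(\boldsymbol{x}_t)\|^2] \\
&+ c_1\eta^3L^2\mathbf{G}_{max}^2\tfrac{\sum_k\Delta_k^2}{K} + c_2\eta\sigma^2.
\end{align*}
Telescoping over $t=0,\ldots,T-1$, using $f(\boldsymbol{x}_0)-f(\boldsymbol{x}^*)\le f_{max}$, and dividing by $\eta T/8$ yields the stated bound, with the constants $8$, $92$, and $9$ emerging after tracking the Young splittings. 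The main obstacle I expect is the staleness bookkeeping: because the $\tau_k(t)$ are themselves random and differ across clients, extracting the average $\sum_k\Delta_k^2/K$ (rather than a looser $\max_k\Delta_k^2$) requires applying Jensen/Cauchy--Schwarz across $k$ at just the right moment, and preserving that averaging through the full expansion of $\|\boldsymbol{x}_{t+1}-\boldsymbol{x}_t\|^2$ is where the coefficient $92$ is most naturally absorbed.
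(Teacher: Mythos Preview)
Your proposed per-step recursion cannot hold as written. The increment $\boldsymbol{x}_{t+1}-\boldsymbol{x}_t=\tfrac{1}{K}\sum_{k\in C_{t+1}}\boldsymbol{\delta}_{k,t+1}$ involves only the clients that happen to communicate at round $t+1$, and each of those contributes a variable number (up to $\Delta_k$) of stochastic gradients. At a round where $C_{t+1}=\emptyset$ the increment is identically zero, so no decomposition of the inner product can produce a term $-\tfrac{\eta}{8}\|\nabla f(\boldsymbol{x}_t)\|^2$ there. More generally, even when $C_{t+1}\neq\emptyset$, adding and subtracting $\nabla f_k(\boldsymbol{x}_t)$ yields $-\tfrac{\eta}{K}\langle\nabla f(\boldsymbol{x}_t),\sum_{k\in C_{t+1}}m_k\nabla f_k(\boldsymbol{x}_t)\rangle$ with $m_k=t+1-\tau_k(t)$, which is neither $-\eta\|\nabla f(\boldsymbol{x}_t)\|^2$ nor bounded below by a constant multiple of it without additional structure. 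The ``conditional expectation eliminates the noise'' step is also delicate: the stochastic gradients $\nabla F_{k,s}$ inside $\boldsymbol{\delta}_{k,t+1}$ are taken at rounds $s\le t$, and $\boldsymbol{x}_t$ itself depends on the same randomness through previously-communicated pseudo-gradients, so $\nabla f(\boldsymbol{x}_t)$ is not independent of the noise you would like to center.

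The paper circumvents all of this by working not with $\boldsymbol{x}_t$ but with the \emph{virtual sequence} $\boldsymbol{z}_t=\boldsymbol{x}_0-\tfrac{\eta}{K}\sum_{t'=0}^{t-1}\sum_{k=1}^K\nabla F_{k,t'}$, which by construction takes exactly one stochastic gradient per client per round. The descent lemma applied to $\boldsymbol{z}_t$ then gives the clean inner product $-\eta\langle\nabla f(\boldsymbol{z}_t),\tfrac{1}{K}\sum_k\nabla F_{k,t}\rangle$, from which $-\tfrac{\eta}{2}\|\nabla f(\boldsymbol{z}_t)\|^2$ is extracted in the standard way. The staleness is pushed entirely into the deviations $\|\boldsymbol{z}_t-\boldsymbol{x}_t\|^2$ and $\|\boldsymbol{x}_{k,t}-\boldsymbol{z}_t\|^2$, which are bounded by $\tfrac{\eta^2\mathbf{G}_{max}^2}{K}\sum_k\Delta_k^2$ and $\tfrac{6\eta^2\mathbf{G}_{max}^2}{K}\sum_k\Delta_k^2+3\eta^2\mathbf{G}_{max}^2\Delta_k^2$ respectively via Jensen and the interval bound $T-\pi_{k,T}\le\Delta_k$. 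After telescoping one obtains a bound on $\tfrac{1}{T}\sum_t\mathbb{E}\|\nabla f(\boldsymbol{z}_t)\|^2$ and only then transfers to $\boldsymbol{x}_t$ through $\|\nabla f(\boldsymbol{x}_t)\|^2\le 2\|\nabla f(\boldsymbol{z}_t)\|^2+2L^2\|\boldsymbol{z}_t-\boldsymbol{x}_t\|^2$. The averaging $\tfrac{1}{K}\sum_k\Delta_k^2$ you correctly flag as the delicate point falls out naturally from the Jensen step on $\|\boldsymbol{z}_t-\boldsymbol{x}_t\|^2$, not from any cross-client Cauchy--Schwarz inside the descent inequality.
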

\begin{proof}
See Appendix A.
\end{proof}
Lemma \ref{lemma1} reveals that the smaller $\Delta_k$ of clients, the better the convergence rate of the model, confirming that the more frequently the clients exchange models with the server, the better the performance of the model. 
It is worth noting that, compared with \cite{avdiukhin2021federated} assigning a common maximum communication interval $\Delta$ for all clients, it is non-trivial that we assign individual $\Delta_k$ for each client $k$ due to two-fold: First, the derivation in this paper is significantly different and thus we obtain a new convergence rate, which also affects the next derivations of selection probability and considered optimization problem. Second, a common $\Delta$ makes all clients send gradients at the same frequency, which ignores the heterogeneous channel conditions of clients. On the contrary, the proposed individual $\Delta_k$ can adapt channel variations and multi-user diversity, i.e., allowing the clients with better channel conditions to send gradients more frequently and those with poor channel conditions to send less frequently.
\par
Furthermore, the client selection probability $\{p_{k,t}\}_{t=0}^{T-1}$ is directly related to the maximum communication interval $\Delta_k$ as follows.
\begin{align}\label{eqn:delta_pik}
\mathbb{E}[\Delta_k] = \sum_{t=0}^{T-1}\left[\left(p_{k,t}\prod_{\tau=0}^{t-1}(1-p_{k,\tau})\right)t\right].
\end{align}
\eqref{eqn:delta_pik} is intractable as variables $p_{k,t}$ are multiplied, so we approximate $\Delta_k$ to a tractable form by the following way. First, based on $\{p_{k,t}\}_{t=0}^{T-1}$, we can obtain the expected number of rounds in which client $k$ communicates with the server in total $T$ rounds.
Subsequently, we assume that all clients exchange models with the server \emph{periodically} throughout the $T$ rounds, so the approximate $\Delta_k$ can be given by
\begin{align}\label{eqn:delta_pik_app}
\Delta_k' = \frac{T}{\sum_{t=0}^{T-1} p_{k,t}}.
\end{align}

Substituting \eqref{eqn:delta_pik_app} into Lemma \ref{lemma1}, we can approximate the upper bound of the convergence rate by the client selection probability $\{p_{k,t}\}_{t=0}^{T-1}$, as shown in the following theorem.
\begin{theorem}\label{theorem1}
Given the client selection probability $p_{k,t}\in\{0,1,\cdots,T-1\}, k\in \{1,2,\cdots,K\}$, the convergence rate of asynchronous FL is approximated by
\begin{align}\label{eqn:theorem1}
\frac{1}{T}&\sum_{t=0}^{T-1}\mathbb{E}[\|\nabla f(\boldsymbol{x}_t)\|^2] 
\leq \frac{8f_{max}}{\eta T} \nonumber \\
&+ 92\eta^2 L^2 \mathbf{G}_{max}^2 \left(\frac{T^2}{K}\sum_{k=1}^K\left(\frac{1}{\sum_{t=0}^{T-1} p_{k,t}}\right)^2\right) + 9\sigma^2.
\end{align}
\end{theorem}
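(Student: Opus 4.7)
The plan is to treat Theorem 1 as a direct corollary of Lemma 1 combined with the approximation $\Delta_k \approx \Delta_k'$ stated in equation (8); once that replacement is justified, the bound (9) is obtained by term-by-term substitution into the Lemma 1 estimate. So the real work lies in motivating the substitution $\Delta_k \to T/\sum_{t=0}^{T-1} p_{k,t}$, not in any new probabilistic or analytic machinery.

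First I would compute the expected number of rounds in which client $k$ actually communicates with the server. Since the decision to transmit at round $t$ is an independent Bernoulli trial with parameter $p_{k,t}$, linearity of expectation yields $\mathbb{E}[\,|\{t : k \in C_t\}|\,] = \sum_{t=0}^{T-1} p_{k,t}$. If one assumes that these communication rounds are spread \emph{uniformly} across the horizon of length $T$ (i.e., client $k$ contacts the server periodically rather than at random moments), then the spacing between consecutive contacts -- which is the quantity controlled by $\Delta_k$ -- is precisely $\Delta_k' = T/\sum_{t=0}^{T-1} p_{k,t}$. This periodic-spacing heuristic replaces the intractable expression (7), where the $p_{k,t}$ appear multiplicatively through $\prod_{\tau}(1-p_{k,\tau})$, by a tractable quantity in which the $p_{k,t}$ enter only through their sum.

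The hard part is exactly this approximation step: in general $\mathbb{E}[\Delta_k] \neq T/\mathbb{E}[\sum_t \mathbf{1}\{k\in C_t\}]$, and random inter-arrival gaps typically exceed the uniform-spacing prediction, so equation (8) should be understood as a design-friendly surrogate rather than a formal upper bound. I would flag this limitation explicitly and rely on the later numerical experiments to confirm that the surrogate preserves the qualitative dependence of the bound on $\{p_{k,t}\}$; any attempt to turn (8) into a rigorous inequality would require either a concentration argument for the maximum spacing of a non-homogeneous Bernoulli process or an additional assumption that the $p_{k,t}$ are bounded away from $0$ uniformly in $t$.

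With the surrogate $\Delta_k \leftarrow \Delta_k'$ in place, the remaining algebra is immediate: substituting into the $\sum_{k=1}^K \Delta_k^2/K$ term of Lemma 1 and pulling the $T^2$ outside the outer sum gives
\begin{align*}
\frac{1}{K}\sum_{k=1}^K (\Delta_k')^2 = \frac{T^2}{K}\sum_{k=1}^K \left(\frac{1}{\sum_{t=0}^{T-1} p_{k,t}}\right)^2,
\end{align*}
and plugging this back into the Lemma 1 bound reproduces the right-hand side of (9) exactly, without requiring any assumption beyond Assumption 1 and $\eta \le 1/(8L)$.
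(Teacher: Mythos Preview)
Your proposal is correct and follows exactly the paper's own approach: the paper derives Theorem~1 by simply substituting the surrogate $\Delta_k' = T/\sum_{t=0}^{T-1} p_{k,t}$ from equation~(8) into the bound of Lemma~1, treating the periodic-spacing approximation as a heuristic rather than a rigorous inequality. If anything, your discussion of why the surrogate is only an approximation (and what would be needed to make it rigorous) is more careful than what appears in the paper itself.
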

\par
Omitting the constant terms in \eqref{eqn:theorem1}, the convergence rate of the model can be written by
\begin{align}\label{eqn:conv_app}
\frac{1}{T}\sum_{t=0}^{T-1}\mathbb{E}[\|\nabla f(\boldsymbol{x}_t)\|^2] = O\left(\frac{T^2}{K}\sum_{k=1}^K\left(\frac{1}{\sum_{t=0}^{T-1} p_{k,t}}\right)^2\right).
\end{align}
\begin{lemma}
More frequent client-server model exchanges can improve convergence. 
\end{lemma}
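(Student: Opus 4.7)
The plan is to invoke the approximate convergence bound already established in Theorem~\ref{theorem1} and exploit its monotonicity in the selection probabilities. First I would translate the qualitative phrase ``more frequent client-server model exchanges'' into the precise statement that, for each client $k$, the aggregate selection mass $S_k := \sum_{t=0}^{T-1} p_{k,t}$ is increased; equivalently, by \eqref{eqn:delta_pik_app}, the effective interval $\Delta_k' = T/S_k$ is decreased. This is the only channel through which the exchange frequency enters the right-hand side of \eqref{eqn:theorem1}.

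Next I would isolate the $p$-dependent term in the bound, namely
\begin{align*}
92\eta^2 L^2 \mathbf{G}_{max}^2 \cdot \frac{T^2}{K}\sum_{k=1}^K \frac{1}{S_k^2},
\end{align*}
and observe that $s \mapsto 1/s^2$ is strictly decreasing on $(0,\infty)$. Hence increasing any single $S_k$ strictly decreases the corresponding summand, while the remaining pieces of the bound, $8 f_{max}/(\eta T)$ and $9\sigma^2$, are independent of $\{p_{k,t}\}$. Consequently the full upper bound on $\tfrac{1}{T}\sum_{t=0}^{T-1}\mathbb{E}[\|\nabla f(\boldsymbol{x}_t)\|^2]$ is monotonically nonincreasing in every $p_{k,t}$, and strictly decreasing whenever at least one $p_{k,t}$ is strictly increased.

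Finally I would conclude that tightening the upper bound on the time-averaged gradient norm constitutes an improved convergence guarantee, which is exactly the content of the lemma. An equivalent route would go through Lemma~\ref{lemma1} directly: more frequent exchanges shrink each $\Delta_k$, hence shrink $\sum_{k=1}^K \Delta_k^2/K$, and the same monotonicity argument yields the conclusion. The ``hard part'' is essentially cosmetic here: there is no substantive analytical obstacle, only the need to make precise what \emph{more frequent} means in the variables appearing in the derived bound, and then to note that the bound is a monotone function of those variables.
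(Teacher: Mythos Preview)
Your proposal is correct and matches the paper's own argument almost exactly: the paper's proof is the one-line observation that increasing $p_{k,t}$ decreases the upper bound in \eqref{eqn:conv_app}, which is precisely the monotonicity-in-$S_k$ step you spell out. Your additional remark about the equivalent route via Lemma~\ref{lemma1} and $\Delta_k$ is a harmless elaboration, not a departure.
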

\begin{proof}
   The conclusion is apparent since increasing the value of $p_{k,t}$ can decrease the upper bound of the convergence rate of asynchronous FL defined in \eqref{eqn:conv_app}.
\end{proof}
\begin{lemma}\label{lemma3}
Fair participation of clients can improve convergence.
\end{lemma}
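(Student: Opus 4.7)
The plan is to treat Lemma 3 as an immediate consequence of the convexity of $x \mapsto 1/x^2$ applied to the approximate convergence bound in \eqref{eqn:conv_app}. First I would introduce the shorthand $P_k := \sum_{t=0}^{T-1} p_{k,t}$, which is the expected total number of rounds in which client $k$ communicates with the server. With this notation, the data-dependent part of the bound in Theorem \ref{theorem1} is proportional to
\begin{align}
\Phi(P_1,\ldots,P_K) \;=\; \sum_{k=1}^{K} \frac{1}{P_k^{\,2}},
\end{align}
so minimizing the convergence bound (for fixed $T$, $K$ and constants) reduces to minimizing $\Phi$.

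Next I would fix the total participation budget $B := \sum_{k=1}^K P_k$, which is the natural constraint: it represents the overall expected amount of communication/energy the system is willing to spend in $T$ rounds. The goal then becomes: minimize $\Phi$ subject to $\sum_k P_k = B$ with $P_k > 0$. Since $\phi(x) = 1/x^{2}$ is strictly convex on $(0,\infty)$, Jensen's inequality gives
\begin{align}
\frac{1}{K}\sum_{k=1}^{K}\frac{1}{P_k^{\,2}} \;\geq\; \frac{1}{\bigl(\frac{1}{K}\sum_{k=1}^{K} P_k\bigr)^{2}} \;=\; \frac{K^{2}}{B^{2}},
\end{align}
so $\Phi \geq K^{3}/B^{2}$, with equality if and only if $P_1 = P_2 = \cdots = P_K = B/K$. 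That equality case is precisely the statement of fair participation: every client, in expectation, is scheduled the same total number of times over the $T$ rounds. Substituting this back into \eqref{eqn:conv_app} shows that, among all probability profiles with a common budget, the fair profile minimizes the upper bound on $\frac{1}{T}\sum_{t}\mathbb{E}[\|\nabla f(\boldsymbol{x}_t)\|^{2}]$, which is exactly what Lemma \ref{lemma3} asserts.

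The proof itself is essentially a one-line Jensen argument, so there is no serious technical obstacle. The only thing that needs care is the framing: "fair participation improves convergence" is a comparative claim, and it is only meaningful once the comparison is pinned down. I would therefore be explicit that the comparison is at a fixed aggregate participation level $\sum_k P_k = B$, since otherwise one could trivially decrease $\Phi$ by increasing every $P_k$. As a minor alternative, the same conclusion could be written via Cauchy--Schwarz in the form $(\sum_k P_k)^{2}(\sum_k 1/P_k^{\,2}) \geq (\sum_k 1/P_k^{\,1/2}\cdot P_k^{\,1/2}\cdot \ldots)^{2}$, but Jensen on $1/x^{2}$ is the cleanest route and makes the equality condition transparent.
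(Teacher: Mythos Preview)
Your proposal is correct and in spirit matches the paper's proof, but the execution is different and, in fact, more economical. Both arguments first fix the total communication budget: your constraint $\sum_k P_k = B$ is exactly the paper's constraint $\sum_k 1/\Delta_k' = C$ (since $P_k = T/\Delta_k'$, so $B = CT$). From there, the paper works in the $\Delta_k'$ variables and chains two inequalities --- Cauchy--Schwarz to lower bound $\sum_k \Delta_k'$, followed by the quadratic--arithmetic mean inequality to lower bound $\sum_k (\Delta_k')^2$ --- whereas you apply Jensen's inequality to the convex map $x\mapsto 1/x^2$ in a single step on the $P_k$ variables. Your route is shorter, gives the equality condition immediately, and even yields the sharper constant ($\sum_k(\Delta_k')^2 \ge K^3/C^2$ versus the paper's $K/C^2$ as written). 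The only substantive point you should retain in the write-up is the one you already flagged: make explicit that ``fair participation improves convergence'' is a comparison at fixed aggregate budget, which is exactly how the paper frames it as well.
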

\begin{proof}
   See Appendix B.
\end{proof}
When $\Delta_k'=\Delta$ for all $k$ in Lemma \ref{lemma3}, i.e., each client exchanges model with the server with the same period, it is actually the age-based scheme \cite{yang2020age} and the optimal solution is Round-Robin scheduling, i.e., each client participates in training by a sequential and alternating manner.

\par
\subsection{Problem Formulation}
In this paper, a joint optimization problem of probabilistic client selection and bandwidth allocation is formulated to trade off the convergence rate and total energy consumption of the asynchronous FL. Specifically, we use \eqref{eqn:conv_app} as the metric of the convergence rate of asynchronous FL.
Furthermore, the energy consumption for a single round $ E_t$ is obtained with \eqref{eqn:ener_total_comsumption}, so the total energy consumption within the number of rounds $T$ is defined as $\sum_{t=0}^{T-1}E_t$.
For simplicity, we use $\mathcal{T}$ and $\mathcal{K}$ to denote $\{0,1,\cdots,T-1\}$ and $\{1,2,\cdots,K\}$, respectively.
The optimization problem can be written as:
\begin{align}
(\text{P1}):\quad \min_{\boldsymbol{p}, \boldsymbol{w}} \quad & \frac{\rho T^2}{K}\sum_{k=1}^K\left(\frac{1}{\sum_{t=0}^{T-1} p_{k,t}}\right)^2 \nonumber\\
   +& (1-\rho)\sum_{t=0}^{T-1}\sum_{k=1}^K \frac{p_{k,t} P_k S}{w_{k,t} W\log\left(1+\frac{P_k h_{k,t}}{w_{k,t} W N_0}\right)} \label{eqn:obj}\\
    {\rm{s.t.}} \quad&\sum_{k=1}^{K} w_{k,t} \leq 1 , \quad\quad~\forall t\in\mathcal{T}, \label{eqn:p1_2}\\
    &0 \leq w_{k,t} \leq 1,  \quad\quad~\forall t\in\mathcal{T},\forall k\in\mathcal{K}, \label{eqn:p1_3}\\
    &\lambda \leq p_{k,t} \leq 1,  \quad\quad~\forall t\in\mathcal{T},\forall k\in\mathcal{K}, \label{eqn:p1_4}
\end{align}
where $\boldsymbol{p} = \{\boldsymbol{p}_1,\boldsymbol{p}_2,\cdots,\boldsymbol{p}_K\}$ with $\boldsymbol{p}_k = \{p_{k,0}, p_{k,1},\cdots,p_{k,T-1}\}$ denotes the client selection probability vector, $\boldsymbol{w} = \{\boldsymbol{w}_1,\boldsymbol{w}_2,\cdots,\boldsymbol{w}_K\}$ with $\boldsymbol{w}_k = \{w_{k,0}, w_{k,1},\cdots,w_{k,T-1}\}$ denotes the bandwidth allocation ratio vector, $\lambda$ is the minimum client selection probability and $\rho$ is the tradeoff coefficient for bi-objective optimization problem, tending to optimize the convergence rate of asynchronous FL when $\rho$ is large and to minimize the total energy consumption when $\rho$ is small.
Constraints \eqref{eqn:p1_2} and \eqref{eqn:p1_3} are the feasibility conditions on the bandwidth allocation, and constraint \eqref{eqn:p1_4} ensures that each client is given at least a certain chance to participate in the asynchronous FL.
\par
However, problem (P1) is a non-convex problem as the second term of objective is in the form of sum-of-ratios. So problem (P1) needs to be transformed into equivalently tractable problems by using suitable mathematical tools.

\section{Optimal Solution for Problem (P1)}
\label{sec:Joint_Sending_Probability_Optimization_and_Bandwidth_Allocation}
In this section, we convert problem (P1) into an equivalent parameterized subtractive-form problem and develop iterative algorithms to find the globally optimal solutions.
\subsection{Fractional Programming Transform}
Problem (P1) is a non-convex problem as the second term of \eqref{eqn:obj} is the sum-of-ratios structure. According to \cite{jong2012efficient}, we transform the sum-ratio optimization problem into a parameterized subtractive-form problem as shown in the following theorem.
\begin{theorem}\label{theorem2}
Denote $\boldsymbol{\alpha} = \{\boldsymbol{\alpha}_1,\boldsymbol{\alpha}_2,\cdots,\boldsymbol{\alpha}_K\}$ with $\boldsymbol{\alpha}_k = \{\alpha_{k,0}, \alpha_{k,1},\cdots,\alpha_{k,T-1}\}$, $\boldsymbol{\beta} = \{\boldsymbol{\beta}_1,\boldsymbol{\beta}_2,\cdots,\boldsymbol{\beta}_K\}$ with $\boldsymbol{\beta}_k = \{\beta_{k,0}, \beta_{k,1},\cdots,\beta_{k,T-1}\}$ and $\boldsymbol{\gamma} = \{\gamma_1, \gamma_2,\cdots,\gamma_K\}$, if $(\boldsymbol{p}^*, \boldsymbol{w}^*)$ is the optimal solution to problem (P1), there exists $\boldsymbol{\alpha}$, $\boldsymbol{\beta}$ and $\boldsymbol{\gamma}$ such that $(\boldsymbol{p}^*, \boldsymbol{w}^*)$ is also the optimal solution to the following problem with $\boldsymbol{\alpha}=\boldsymbol{\alpha}^*$, $\boldsymbol{\beta} = \boldsymbol{\beta}^*$ and $\boldsymbol{\gamma} = \boldsymbol{\gamma}^*$:
\begin{align}
    (\text{P2}):\quad \min_{\boldsymbol{p}, \boldsymbol{w}} \quad & \frac{\rho T^2}{K}\sum_{k=1}^K\left(\frac{1}{(\sum_{t=0}^{T-1} p_{k,t})^2}-\gamma_k\right) \nonumber\\
    &~~+\sum_{t=0}^{T-1}\sum_{k=1}^K\alpha_{k,t}\bigg(p_{k,t}P_kS(1-\rho) \nonumber \\
    &~~-\beta_{k,t}w_{k,t} W\log\left(1+\frac{P_k h_{k,t}}{w_{k,t} W N_0}\right)\bigg)\\
    {\rm{s.t.}} \quad&\sum_{k=1}^{K} w_{k,t} \leq 1 , \quad\quad~\forall t\in\mathcal{T}, \\
    &0 \leq w_{k,t} \leq 1,  \quad\quad~\forall t\in\mathcal{T},\forall k\in\mathcal{K}, \\
    &\lambda \leq p_{k,t} \leq 1,  \quad\quad~\forall t\in\mathcal{T},\forall k\in\mathcal{K}. 
\end{align}
Moreover, $(\boldsymbol{p}^*, \boldsymbol{w}^*)$ also satisfies the following equations, when we set $\boldsymbol{\alpha}=\boldsymbol{\alpha}^*$, $\boldsymbol{\beta} = \boldsymbol{\beta}^*$ and $\boldsymbol{\gamma} = \boldsymbol{\gamma}^*$
\begin{align}\label{eqn:cont}
\left\{
\begin{aligned}
&\alpha_{k,t}^*R_{k,t}^*-1=0, ~~\quad\quad\quad\quad\quad\quad~\forall t\in\mathcal{T},\forall k\in\mathcal{K},\\
&\beta_{k,t}^*R_{k,t}^*-p_{k,t}^*P_k S(1-\rho)=0, \quad~\forall t\in\mathcal{T},\forall k\in\mathcal{K},\\
&\gamma_k^* - \frac{\rho T^2}{K(\sum_{t=0}^{T-1} p_{k,t}^*)^2}=0, \quad\quad\quad~\forall k\in\mathcal{K},
\end{aligned}
\right.
\end{align}
where $R_{k,t}^*$ indicates the transmission rate when $w_{k,t}=w_{k,t}^*$.
\end{theorem}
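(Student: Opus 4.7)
The plan is to apply the sum-of-ratios fractional programming transform of \cite{jong2012efficient} to each fractional term appearing in (P1). Two families of ratios are present in the objective: the $K$ convergence ratios $1/(\sum_{t} p_{k,t})^{2}$, which are functions of $\boldsymbol{p}$ alone, and the $KT$ energy ratios $p_{k,t}P_{k}S/R_{k,t}$, which are genuine fractions with the decision variable $w_{k,t}$ appearing nonlinearly in the denominator through $R_{k,t}$. I would treat them with separate auxiliary variables: $\boldsymbol{\gamma}$ for the first family and the pair $(\boldsymbol{\alpha},\boldsymbol{\beta})$ for the second, so that the resulting subtractive surrogate is exactly the objective of (P2).

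More concretely, I would introduce $\beta_{k,t}$ as a value-proxy for the $(k,t)$ energy ratio and $\alpha_{k,t}$ as the multiplier that couples $\beta_{k,t}$ to the actual ratio via the relation $\beta_{k,t}R_{k,t}=p_{k,t}P_{k}S(1-\rho)$. Replacing $\frac{p_{k,t}P_{k}S(1-\rho)}{R_{k,t}}$ by $\alpha_{k,t}\bigl(p_{k,t}P_{k}S(1-\rho)-\beta_{k,t}R_{k,t}\bigr)$ yields the second line of (P2). Symmetrically, $\gamma_{k}$ serves as a value-proxy for the convergence ratio of client $k$, giving the $-\gamma_{k}$ shift inside the first term of (P2). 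To certify equivalence, I would write the KKT stationarity equations of (P1) and of (P2) at fixed $(\boldsymbol{\alpha},\boldsymbol{\beta},\boldsymbol{\gamma})$, then show that they coincide at $(\boldsymbol{p}^{*},\boldsymbol{w}^{*})$ exactly when the parameters take the values in \eqref{eqn:cont}. Concretely, differentiating the (P2) Lagrangian in $w_{k,t}$ and comparing with $\partial_{w_{k,t}}\bigl(p_{k,t}P_{k}S(1-\rho)/R_{k,t}\bigr)$ of (P1) forces $\alpha_{k,t}^{*}R_{k,t}^{*}=1$ and $\beta_{k,t}^{*}R_{k,t}^{*}=p_{k,t}^{*}P_{k}S(1-\rho)$; matching the stationarity in $p_{k,t}$ at the convergence-term level pins $\gamma_{k}^{*}=\rho T^{2}/[K(\sum_{t}p_{k,t}^{*})^{2}]$. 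Invoking the sufficiency direction of \cite{jong2012efficient} then shows that any optimum of (P1) remains optimal for (P2) at these parameter values.

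The main obstacle will be that the transform in \cite{jong2012efficient} is stated for ratios of the form $f_{i}/g_{i}$ with $f_{i}\geq 0,\;g_{i}>0$ and unit exponent, whereas the convergence term involves the squared denominator $(\sum_{t}p_{k,t})^{2}$. I would address this by viewing the convergence ratio as $1/g_{k}(\boldsymbol{p})$ with $g_{k}(\boldsymbol{p})=(\sum_{t}p_{k,t})^{2}$, which is strictly positive on the feasible region because \eqref{eqn:p1_4} enforces $p_{k,t}\geq\lambda>0$; the auxiliary variable $\gamma_{k}$ then inherits the usual role of a ratio-value proxy and naturally produces the third identity in \eqref{eqn:cont}. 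A secondary issue is verifying the constraint qualification needed for KKT necessity, which is immediate since the feasible set defined by \eqref{eqn:p1_2}--\eqref{eqn:p1_4} is a polytope with nonempty interior. Once these two points are settled, the KKT-matching argument delivers the equivalence of (P1) and (P2) together with \eqref{eqn:cont} in a single stroke.
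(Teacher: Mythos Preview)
Your proposal is correct and follows essentially the same route as the paper: both invoke the sum-of-ratios transform of \cite{jong2012efficient}, introduce $(\boldsymbol{\alpha},\boldsymbol{\beta},\boldsymbol{\gamma})$ in the same roles, and establish equivalence through first-order optimality conditions together with the convexity of (P2). The only cosmetic difference is that the paper first passes to an epigraph reformulation of (P1) and applies Fritz--John necessity there (obtaining $\alpha_{k,t}^{*}=1/R_{k,t}^{*}$ and a redundant multiplier $\nu_{k}^{*}=1$ for the $\gamma$-constraints) before checking KKT sufficiency for (P2), whereas you match the KKT systems of (P1) and (P2) directly after noting that the polyhedral feasible set supplies the needed constraint qualification; these are equivalent presentations of the same argument.
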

\begin{proof}
See Appendix C.
\end{proof}
\par
 Theorem \ref{theorem2} transforms problem (P1) into an equivalent parameterized form (P2) by introducing additional parameters $(\boldsymbol{\alpha}, \boldsymbol{\beta}, \boldsymbol{\gamma})$, where problems (P1) and (P2) have the same optimal solutions $(\boldsymbol{p}^*, \boldsymbol{w}^*)$ when $(\boldsymbol{\alpha}, \boldsymbol{\beta}, \boldsymbol{\gamma}) = (\boldsymbol{\alpha}^*, \boldsymbol{\beta}^*, \boldsymbol{\gamma}^*)$. Accordingly, problem (P2) can be solved in two layers: in the inner-layer, $(\boldsymbol{p}^*, \boldsymbol{w}^*)$ can be obtained by solving the subtractive-form problem (P2) with given $(\boldsymbol{\alpha}, \boldsymbol{\beta}, \boldsymbol{\gamma})$. Then, in the outer layer, we find the optimal $(\boldsymbol{\alpha}^*, \boldsymbol{\beta}^*, \boldsymbol{\gamma}^*)$ satisfying \eqref{eqn:cont}.

\subsection{Solving Problem (P2) Given $(\boldsymbol{\alpha}, \boldsymbol{\beta}, \boldsymbol{\gamma})$}
Problem (P2) is further decoupled into two sub-problems, namely client selection probability optimization and bandwidth allocation. First, given $(\boldsymbol{\alpha}, \boldsymbol{\gamma})$, the client selection probability optimization problem for each client $k$ can be written as follows.
\begin{align}\label{eqn:p3}
    (\text{P3}):\quad \min_{\boldsymbol{p}} \quad & \frac{\rho T^2}{K}\left(\frac{1}{(\sum_{t=0}^{T-1} p_{k,t})^2}-\gamma_k\right) \nonumber\\
    &+\sum_{t=0}^{T-1}\alpha_{k,t} P_kS(1-\rho)p_{k,t}\\
    {\rm{s.t.}} \quad&\lambda \leq p_{k,t} \leq 1,  \quad\quad~\forall t\in\mathcal{T},\forall k\in\mathcal{K}.  
\end{align}
By solving the $K$ sub-problems (P3), we can obtain the optimal selection probability $\boldsymbol{p^*}$ to problem (P2).
Similarly, given $(\boldsymbol{\alpha}, \boldsymbol{\beta})$, the bandwidth allocation problem for each round $t$ can be written as follows.
\begin{align}
    (\text{P4}):\quad \max_{\boldsymbol{w}} \quad & \sum_{k=1}^K \alpha_{k,t}\beta_{k,t} w_{k,t} W\log\left(1+\frac{P_k h_{k,t}}{w_{k,t} W N_0}\right)\\
    {\rm{s.t.}} \quad&\sum_{k=1}^{K} w_{k,t} \leq 1 , \quad\quad~\forall t\in\mathcal{T}, \\
    &0 \leq w_{k,t} \leq 1,  \quad\quad~\forall t\in\mathcal{T},\forall k\in\mathcal{K}. 
\end{align}
The optimal bandwidth allocation ratio $\boldsymbol{w^*}$ to problem (P2) can also be obtained by solving the $T$ sub-problems (P4).
\par
Now we solve problem (P3) which is a convex problem as all constraints in problem (P3) are affine. We adopt the block coordinate descent (BCD) to solve the problem. Specifically, by denoting \eqref{eqn:p3} as $L_1$, the optimal value of $p_{k,t}$ can be obtained by setting $\frac{\partial L_1}{\partial p_{k,t}} = 0$ as
\begin{align}
\alpha_{k,t}P_kS(1-\rho)-\frac{2\rho T^2}{K(\sum_{t=0}^{T-1} p_{k,t})^3} = 0.
\end{align}
Combining the constraint $\lambda\leq p_{k,t}\leq 1$, $p_{k,t}^*$ can be given by 
\begin{align}\label{eqn:bcd}
    p_{k,t}^* = \left[\left(\frac{2\rho T^2}{K\alpha_{k,t}P_k S(1-\rho)}\right)^{\frac{1}{3}}-\sum_{j\neq t}^{T-1} p_{k,j}\right]_{\lambda}^1.
\end{align}
Thus each $p_{k,t}$ can be updated alternately via \eqref{eqn:bcd} by assuming that other $p_{k,j}$ $(j\neq t)$ are fixed. The alternating optimization method can eventually converge to the optimal solution $\boldsymbol{p}^*$ to problem (P3)  since the problem is convex.
\par
Problem (P4) is also a convex problem with respect to the bandwidth allocation ratio $\boldsymbol{w}$, thus the Lagrangian dual method can be adopted to solve this problem optimally. By introducing the Lagrangian multipliers $\boldsymbol{v} = \{v_0,v_1,\cdots,v_{T-1}\}$ associated with the constraint \eqref{eqn:p1_2}, the Lagrangian function for problem (P4) can be written as 
\begin{align}
    L_2 = \sum_{k=1}^K \alpha_{k,t}\beta_{k,t} w_{k,t} W\log&\left(1+\frac{P_k h_{k,t}}{w_{k,t} W N_0}\right) \nonumber\\
    &+ v_t(1-\sum_{i=1}^K w_{k,t}).
\end{align}
Accordingly, the Lagrangian dual problem is given by
\begin{align}\label{eqn:dual_p}
    \min_{\boldsymbol{v}\succeq0} g_1(\boldsymbol{v}),
\end{align}
where
\begin{align}
    g_1(\boldsymbol{v}) = \max_{0<w_{k,t}\leq1} L_2(\boldsymbol{w}, \boldsymbol{v}).
\end{align}
The second order of the Lagrangian function $L_2$ with respect to $w_{k,t}$ is 
\begin{align}
    \frac{\partial^2L_2}{\partial (w_{k,t})^2} = -\frac{(P_k h_{k,t})^2}{w_{k,t} (W N_0 w_{k,t}+P_k h_{k,t})^2},
\end{align}
which is negative so that $L_2$ is concave over $w_{k,t}$ and the optimal solution can be obtained by setting $\frac{\partial L_2}{\partial w_{k,t}} = 0$. 
Considering the constraint $0\leq w_{k,t} \leq 1$, the closed-form solution of $w_{k,t}^*$ is given by
\begin{align}\label{eqn:optimal_w}
    w_{k,t}^* = \left[\frac{P_k h_{k,t}}{W N_0 \exp[\mathcal{W}(-\exp(-A_{k,t}))+A_{k,t}]-WN_0}\right]_0^1,
\end{align}
where
\begin{align}
    A_{k,t} = \frac{\alpha_{k,t} \beta_{k,t} W+v_t}{\alpha_{k,t} \beta_{k,t} W},
\end{align}
and $\mathcal{W}(x)$ is the principal branch of the Lambert $W$ function which is defined as the inverse function of $f(x)=x\exp(x)$ \cite{corless1996lambertw}. The details of obtaining the close form solution of $w_{k,t}^*$ can be found at Appendix D.
\par
For the dual problem \eqref{eqn:dual_p}, the subgradient method is used to find the optimal Lagrangian multipliers $\boldsymbol{v}$, in which each $v_t$ is updated as
\begin{align}\label{eqn:update_v}
    v_t^{t_1+1} = \left[v_t^{t_1}-\epsilon^{t_1}(1-\sum_{k=1}^K w_{k,t})\right]^{+},
\end{align}
where $t_1$ is the number of iterations for the subgradient method, $\epsilon^{t_1}$ is the step size set in iteration $t_1$ and $[x]^+ = \max\{x,0\}$.

\subsection{Finding Optimal $(\boldsymbol{\alpha}^*, \boldsymbol{\beta}^*, \boldsymbol{\gamma}^*)$ for Given $(\boldsymbol{p}^*, \boldsymbol{w}^*)$}
After obtaining the optimal selection probability and bandwidth allocation ratio $(\boldsymbol{p}^*, \boldsymbol{w}^*)$, we now propose an algorithm to update $(\boldsymbol{\alpha}, \boldsymbol{\beta}, \boldsymbol{\gamma})$. To begin with, we define the following functions.
\begin{align}
    &\psi_{k,t} =  \alpha_{k,t}R_{k,t}^*-1, \\
    &\kappa_{k,t} = \beta_{k,t}R_{k,t}^*-p_{k,t}^*P_k S(1-\rho),  \\
    &\chi_{k,t} = \gamma_k - \frac{\rho T^2}{K(\sum_{t=0}^{T-1} p_{k,t}^*)^2}.
\end{align}
 According to \cite{jong2012efficient}, $(\boldsymbol{p}^*, \boldsymbol{w}^*)$ is also the optimal solution to problem (P2) if $\psi_{k,t}=\kappa_{k,t}=\chi_{k,t}=0$ for all $k\in \mathcal{K}$ and $t \in \mathcal{T}$. Otherwise, we update $(\boldsymbol{\alpha}, \boldsymbol{\beta}, \boldsymbol{\gamma})$ using the modified Newton method \cite{jong2012efficient} as follows.
\begin{align}
&\alpha_{k,t}^{t_2+1} = (1-\zeta^l)\alpha_{k,t}^{t_2} + \zeta^l \frac{1}{R_{k,t}^*}, \label{eqn:outer_1} \\
&\beta_{k,t}^{t_2+1} = (1-\zeta^l)\beta_{k,t}^{t_2} + \zeta^l \frac{p_{k,t}^*P_k S(1-\rho)}{R_{k,t}^*}, \label{eqn:outer_2}\\
&\gamma_{k,t}^{t_2+1} = (1-\zeta^l)\gamma_{k,t}^{t_2} + \zeta^l\frac{\rho T^2}{K(\sum_{t=0}^{T-1} p_{k,t}^*)^2},\label{eqn:outer_3}
\end{align}
where $t_2$ is the iteration index and $\zeta^l$ is the step size set at iteration $t_2$ via the following method. Let $l$ be the smallest integer among $l\in\{1,2,3,\cdots\}$ satisfying
\begin{align}\label{eqn:smallest}
\sum_{t=0}^{T-1}\sum_{k=1}^K&\Bigg\{\left|\psi_{k,t}\left((1-\zeta^{l})\alpha_{k,t}^{t_2} + \zeta^{l} \frac{1}{R_{k,t}^*}\right)\right|^2 \nonumber\\
&+\left| \kappa_{k,t} \left(( 1-\zeta^{l})\beta_{k,t}^{t_2} + \zeta^{l} \frac{p_{k,t}^*P_kS(1-\rho)}{R_{k,t}^*} \right)\right|^2 
\nonumber\\
&+\left|\chi_{k,t}\left((1-\zeta^{l})\gamma_{k,t}^{t_2} + \zeta^{l}\frac{\rho T^2}{K(\sum_{t=0}^{T-1} p_{k,t}^*)^2}\right) \right|^2 \Bigg\} \nonumber\\
&\leq (1-\varepsilon\zeta^{l})\sum_{t=0}^{T-1}\sum_{k=1}^K\big(|\psi_{k,t}(\alpha_{k,t}^{t_2})|^2\nonumber\\
&~~+|\kappa_{k,t}(\beta_{k,t}^{t_2})|^2+|\chi_{k,t}(\gamma_{k,t}^{t_2})|^2\big),
\end{align}
where $\varepsilon\in (0,1)$ and $\zeta\in(0,1)$.
\par
To summarize, we optimally solve the inner-layer convex problem (P3) and (P4) given $(\boldsymbol{\alpha}, \boldsymbol{\beta}, \boldsymbol{\gamma})$, and then solve the outer-layer problem by the modified Newton method, where the two-layer problems are solved alternately in a loop to finally obtain the globally optimal solution to problem (P2), which is presented in Algorithm \ref{alg:1}.

\setlength{\textfloatsep}{20pt}%
\begin{algorithm}[htb]
\caption{Optimal Algorithm for Problem (P2)}
\begin{algorithmic}[1]\label{alg:1}
\STATE \textbf{Initialize} $(\boldsymbol{\alpha}, \boldsymbol{\beta}, \boldsymbol{\gamma})$.
\REPEAT
\STATE \textbf{Initialize} $(\boldsymbol{p},\boldsymbol{w})$.
\REPEAT
\STATE Compute $p_{k,t}$ for given $(\boldsymbol{\alpha}, \boldsymbol{\gamma})$ with fixed $p_{k,j}$ $(\forall j\neq t)$ according to \eqref{eqn:bcd}.
\UNTIL $\boldsymbol{p}$ converges.
\REPEAT 
\STATE Compute $\boldsymbol{w}$ for given $(\boldsymbol{\alpha}, \boldsymbol{\beta}, \boldsymbol{v})$ according to \eqref{eqn:optimal_w}.
\STATE Update $\boldsymbol{v}$ based on the subgradient method in \eqref{eqn:update_v}.
\UNTIL $\boldsymbol{v}$ converges.
\STATE Update $\boldsymbol{\alpha}$, $\boldsymbol{\beta}$ and $\boldsymbol{\gamma}$ by \eqref{eqn:outer_1}, \eqref{eqn:outer_2} and \eqref{eqn:outer_3}, respectively.
\UNTIL the conditions in \eqref{eqn:cont} are achieved.
\ENSURE $(\boldsymbol{p}^*, \boldsymbol{w}^*)$.
\end{algorithmic}
\end{algorithm}

\subsection{Online Implementation}
So far the proposed Algorithm 1 has found the optimal selection probability and bandwidth allocation ratio. However, since the calculation of the selection probability $p_{k,t}$ in \eqref{eqn:bcd} needs the probabilities of other rounds, Algorithm 1 can be operated by an offline fashion. To address this issue, we extend the algorithm to an online one. Specifically, we assume that the selection probability of each client $p_{k,t}$ is identical in all rounds, i.e., $p_{k,t}=p_k$ for all round $t$. Note that this assumption is widely used in the literature \cite{shi2020joint, wang2021quantized}. Then, problem (P1) can be rewritten as
\begin{align}
    (\text{P1}^{\prime}):\quad \min_{\boldsymbol{p}^{\prime}, \boldsymbol{w}^{\prime}} \quad & \frac{\rho}{K}\sum_{k=1}^K\left(\frac{1}{p_{k}}\right)^2 \nonumber\\
    &~~+ (1-\rho)T\sum_{k=1}^K \frac{p_{k} P_k S}{w_{k} W\log\left(1+\frac{P h_{k}}{w_{k} W N_0}\right)} \\
    {\rm{s.t.}} \quad&\sum_{k=1}^{K} w_{k} \leq 1 ,  \label{eqn:st1}\\
    &0 \leq w_{k} \leq 1,  \quad\quad~\forall k\in\mathcal{K}, \label{eqn:st2}\\
    &\lambda \leq p_{k} \leq 1,  \quad\quad~\forall k\in\mathcal{K}, \label{eqn:st3}
\end{align}
where $\boldsymbol{p}^{\prime} = \{p_{1}, p_{2},\cdots,p_{K}\}$ and $\boldsymbol{w}^{\prime} = \{w_{1}, w_{2},\cdots,w_{K}\}$. Problem (P1$^\prime$) has the same structure as problem (P1), thus we can extend the proposed Algorithm 1 to solve problem (P1$^\prime$). Specifically, problem (P1$^\prime$) can be transformed into a parameterized subtractive-form problem with given auxiliary parameters $(\boldsymbol{\alpha}, \boldsymbol{\beta}, \boldsymbol{\gamma})$ as
\begin{align}
    &\quad \min_{(\boldsymbol{p}^{\prime}, \boldsymbol{w}^{\prime})\in \mathcal{F}} \quad  \frac{\rho}{K}\sum_{k=1}^K\left(\frac{1}{ p_{k}^2}-\gamma_k\right) + \nonumber\\
    &\sum_{k=1}^K\alpha_{k}\left(p_{k}P_kST(1-\rho)-\beta_{k}w_{k} W\log\left(1+\frac{P_k h_{k}}{w_{k} W N_0}\right)\right),
\end{align}
where $\mathcal{F}$ is the set of $(\boldsymbol{p}^{\prime}, \boldsymbol{w}^{\prime})$ satisfying constraints \eqref{eqn:st1}-\eqref{eqn:st3}. Then the optimal $w_{k}^*$ can be given by \eqref{eqn:optimal_w}, where the subscript $t$ in \eqref{eqn:optimal_w} is omitted. The optimal $p_k$ can be given by 
\begin{align}
    p_{k}^* = \left[\left(\frac{2\rho}{K\alpha_{k}P_k ST(1-\rho)}\right)^{\frac{1}{3}}\right]_{\lambda}^1.
\end{align}
Here the optimal $p_k^*$ is calculated only using the information in the current round, so the algorithm can be implemented online.

\section{Experimental Results}
\label{sec:Experimental_Results}
In this section, we conduct experiments to evaluate the performance of our proposed scheme.
\subsection{Experiment Settings}
\label{Experiment_Settings}
\emph{Wireless Network Settings:} We consider a cell network with a radius of $1000$ m. The server is located at the center of the network and the clients are randomly and uniformly distributed in the cell, where the path loss between each client and the server is $128.1 + 37.6 \log_{10}(r)$ (in dB) according to \cite{abetaevolved}, where $r$ is the distance from the client to the server in kilometers. The client transmit power is set uniformly to $0.2$ W. We consider orthogonal uplink channel access with total bandwidth $W=5$ MHz and the power spectrum density of the additive Gaussian noise $N_0 = -174$ dBm/Hz. Unless mentioned otherwise, the main parameters of wireless networks used in the experiments are summarized in Table \ref{parameter_setting_table}.
\begin{table}[t]
\caption{Parameters of the Wireless Network}
\label{parameter_setting_table}
\vspace{-0.1in}
\centering
\begin{tabular}{|c|c|}
\hline
Parameter                             & Value                      \\ \hline
Number of clients, $K$                & $10$                         \\ \hline
Radius of the cell network, $R$       & $1000$ m                      \\ \hline
System bandwidth, $W$                 & $5$ MHz                      \\ \hline
Path loss from client to server       & $128.1 + 37.6 \log_{10}r_{[km]}$ dB \\ \hline
Client transmit power, $P_k$          & $0.2$ W                       \\ \hline
Channel noise power density, $N_0$    & $-174$ dBm/Hz             \\ \hline
\end{tabular}
\end{table}
\par
\emph{FL Settings:} We consider the learning task of training a classifier to evaluate the proposed scheme on the MNIST dataset and the CIFAR-10 dataset, respectively. MNIST is a database of handwritten digits containing $60,000$ training examples and $10,000$ test examples, the examples contain $10$ categories from the numbers ``0" to ``9". CIFAR10 is a commonly image classification database containing $50,000$ training examples and $10,000$ test examples, the examples contain $10$ different objects. With the same setup as in \cite{zhu2019broadband}, we assume that each client has an equal number of training examples and the local training datasets do not overlap with each other. Also, we consider the non-IID data distribution to better model the data heterogeneity in asynchronous FL. Specifically, we first divide the dataset into $10$ data blocks according to the label, and then further divide each data block into $\frac{d\cdot K}{10}$ shards, and finally each client is assigned with $d$ shards with different labels. And the non-IID level of local datasets can be controlled by $d$, and the smaller the value of $d$, the more heterogeneous the data distribution is. 
We train a simple multilayer perceptron (MLP) model consisting of one hidden layer with $200$ nodes for the MNIST dataset and this neural network’s model size is $S = 6.37 \times 10^6$ bits. We set the batch size to $10$, the learning rate to $0.01$ and the clients to perform $5$ local iteration per communication round.
We use AlexNet \cite{krizhevsky2012imagenet} to train the CIFAR-10 dataset and this model size is $4.57 \times 10^8$ bits. We set the batchsize to $128$, the learning rate to $0.01$ and the clients to perform $1$ local iteration per round.
\par
\emph{Benchmarks:} We introduce three benchmark schemes to demonstrate the effectiveness of the proposed scheme: 1) \emph{Random scheme:} All clients decide whether to communicate with the server with the same probability $p$. 2) \emph{Greedy scheme \cite{9170917, 9685090}:} In each communication round, it selects the top $k$ clients with largest channel gain to participate in the model training. 3) \emph{Age-based scheme \cite{yang2020age}:} In each communication round, it selects $k$ clients in a sequential and alternating manner. Note that the value of $k$ is set to the same as the number of the selected clients in the proposed scheme for fair comparison.

\subsection{Evaluation of the Proposed Scheme}
\begin{figure}[t]
\centering
 \includegraphics[width=9.0 cm]{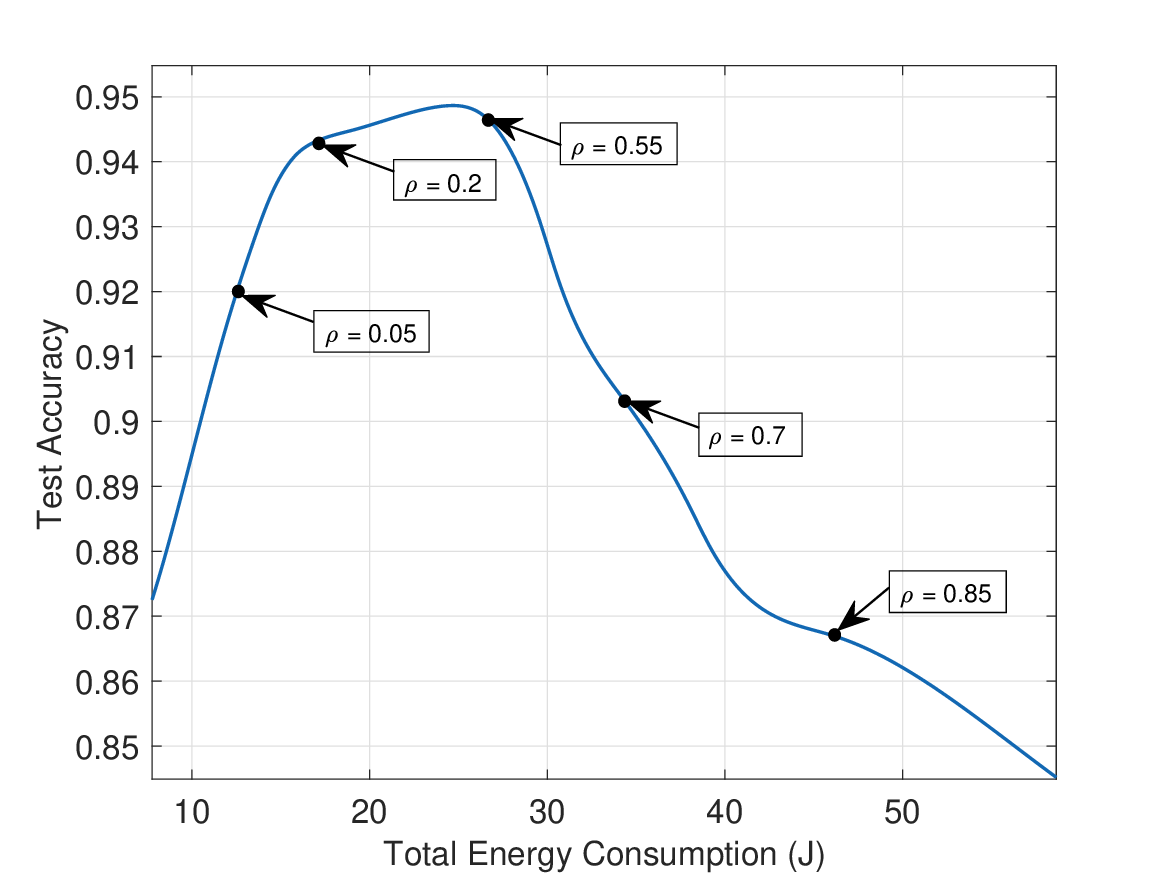}
\vspace{-0.1in}
\caption{Test accuracy v.s. total energy consumption where $\rho$ varies from $0.01$ to $0.9$.}
\label{fig:1}
\end{figure}

\begin{figure}[t]
\centering
 \includegraphics[width=8.8cm]{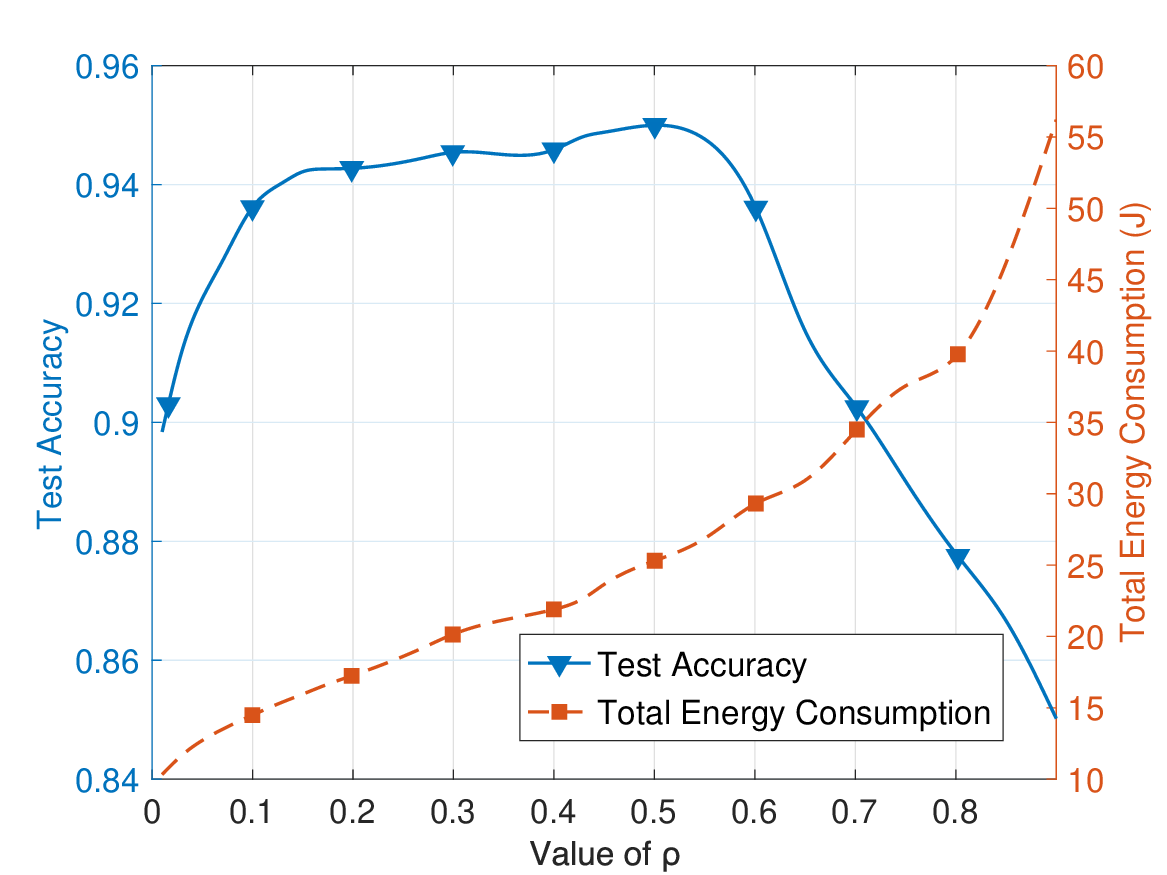}
\vspace{-0.1in}
\caption{Test accuracy and total energy consumption v.s. value of $\rho$.}
\label{fig:2}
\end{figure}
As mentioned in Section \ref{sec:system_model_and_problem_formulation}, we use the coefficient $\rho$ to weigh the convergence performance and energy consumption in our optimization objective, with a larger $\rho$ focusing more on convergence and a smaller $\rho$ on energy, thus we first discuss the impact of $\rho$ on the model performance. We train a model on the MNIST dataset at a non-IID level of $d=5$ (each local dataset has $5$ different categories). Fig. \ref{fig:1} shows the relationship between the test accuracy and the total energy consumption after $50$ communication rounds by varying $\rho$ from $0.01$ to $0.9$. We observe that the test accuracy rapidly increase with respect to total energy consumption when $\rho$ is small, i.e., when the $\rho$ is between $0.01$ and $0.1$. It indicates that the focus of the optimization changes from minimizing the energy consumption to increasing the convergence performance as $\rho$ increases. Fig. \ref{fig:2} shows the test accuracy and total energy consumption after $50$ communication rounds with respect to the value of $\rho$. From Fig. \ref{fig:2}, as the value of $\rho$ increases from $0.01$ to $0.1$, more clients participate in the model training, which although increases the total energy consumption, it enables the global model acquiring more information from clients and thus converge faster and obtain a higher test accuracy. However, when $\rho$ is larger, i.e., when $\rho$ increases from $0.1$ to $0.9$, the test accuracy first remains stable and then gradually decreases. This is due to the data heterogeneity of the clients and leads to large differences between local updates of the clients. Thus the global model obtained after aggregation deviates from the optimal direction of global loss decrease,  and thus results in a decrease in test accuracy. Therefore, in practice, we can select $\rho$ in a suitable range, such as among $0.01$ to $0.1$, where we can select a larger $\rho$ for the focus of model convergence, and at the same time, we can select a smaller $\rho$ to meet a lower energy budget.
\subsection{Performance Comparison}
\begin{figure}[t]
\centering
 \includegraphics[clip, viewport= 30 0 420 320, width=8.6cm]{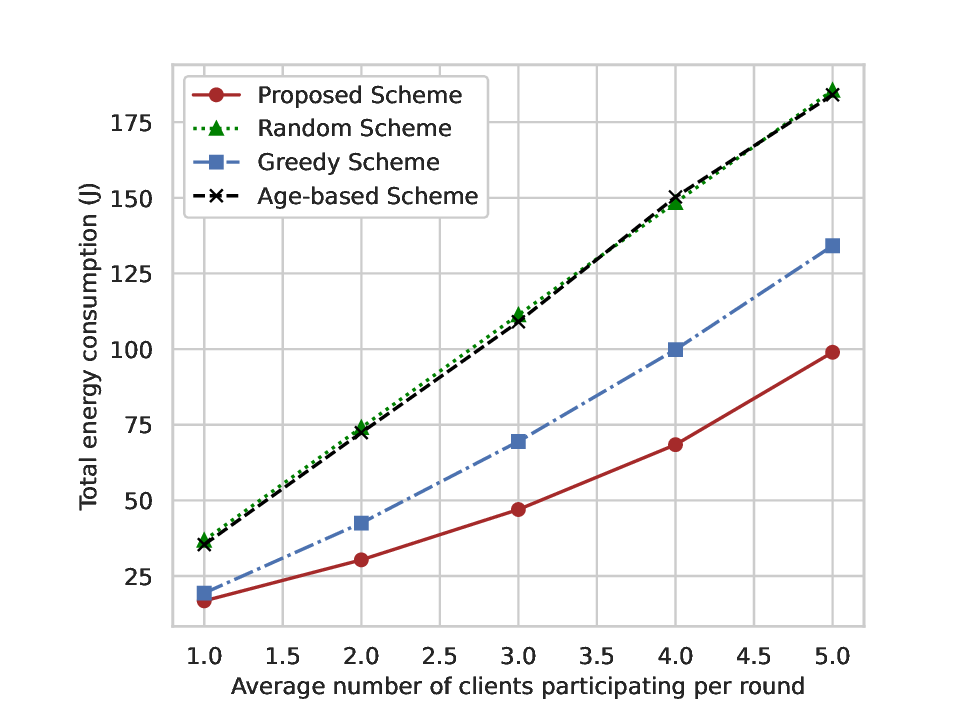}
\caption{Total energy consumption v.s. the average number of clients involved per round.}
\label{fig:3_0}
\end{figure}

\begin{figure}[t]
\centering
 \includegraphics[clip, viewport= 30 0 420 320, width=8.6cm]{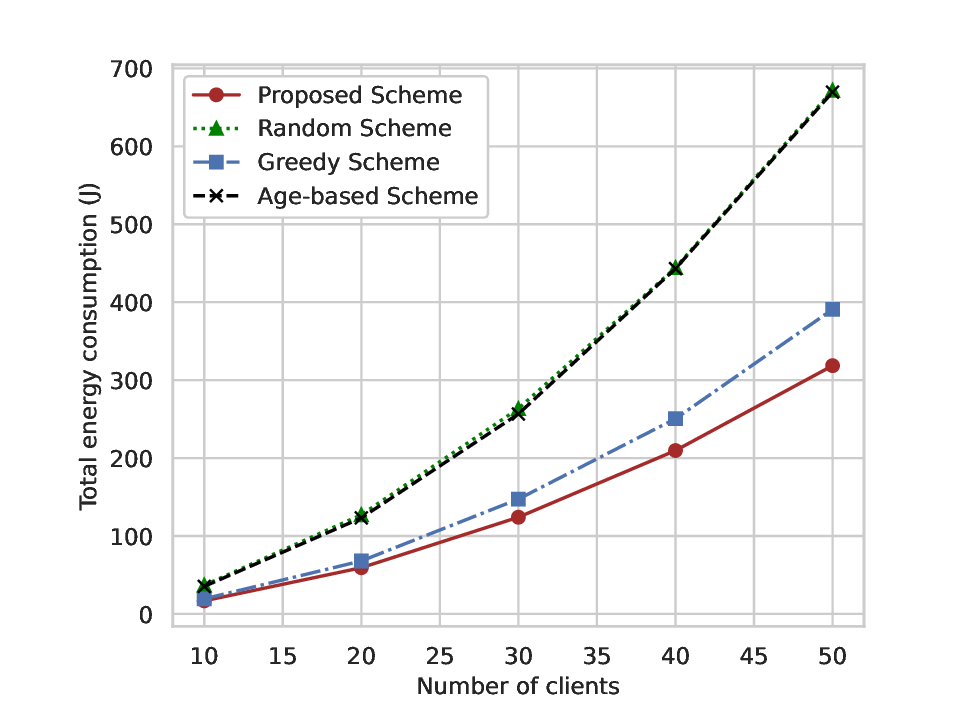}
\caption{Total energy consumption v.s. the number of clients.}
\label{fig:3_5}
\end{figure}

\begin{figure}[t]
  \centering 

  \subfigure[Test accuracy v.s. energy consumption when the average number of clients participating per round is 1 with MNIST dataset.]{ 
    \includegraphics[clip, viewport= 30 0 420 360,width=4.1cm]{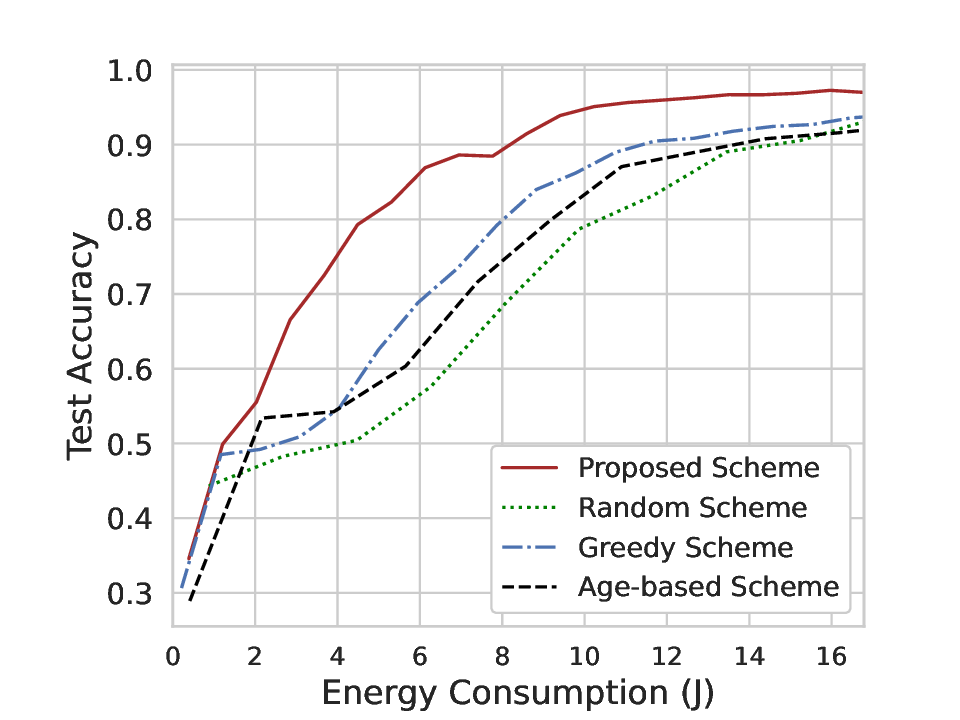} 
  } 
  \centering
  \subfigure[Test accuracy v.s. energy consumption when the average number of clients participating per round is 2 with MNIST dataset.]{ 
    \includegraphics[clip, viewport= 30 0 420 360, width=4.1cm]{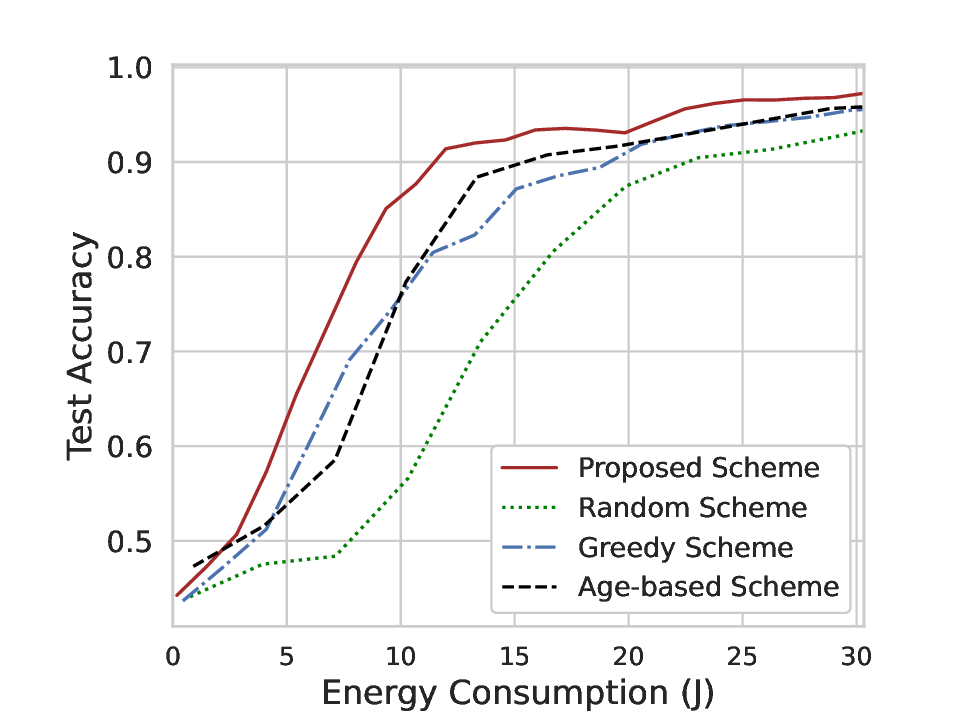} 
  } 
  \\
  \centering 
  \subfigure[Test accuracy v.s. energy consumption when the average number of clients participating per round is 1 with CIFAR-10 dataset.]{ 
    \includegraphics[clip, viewport= 30 0 420 360,width=4.1cm]{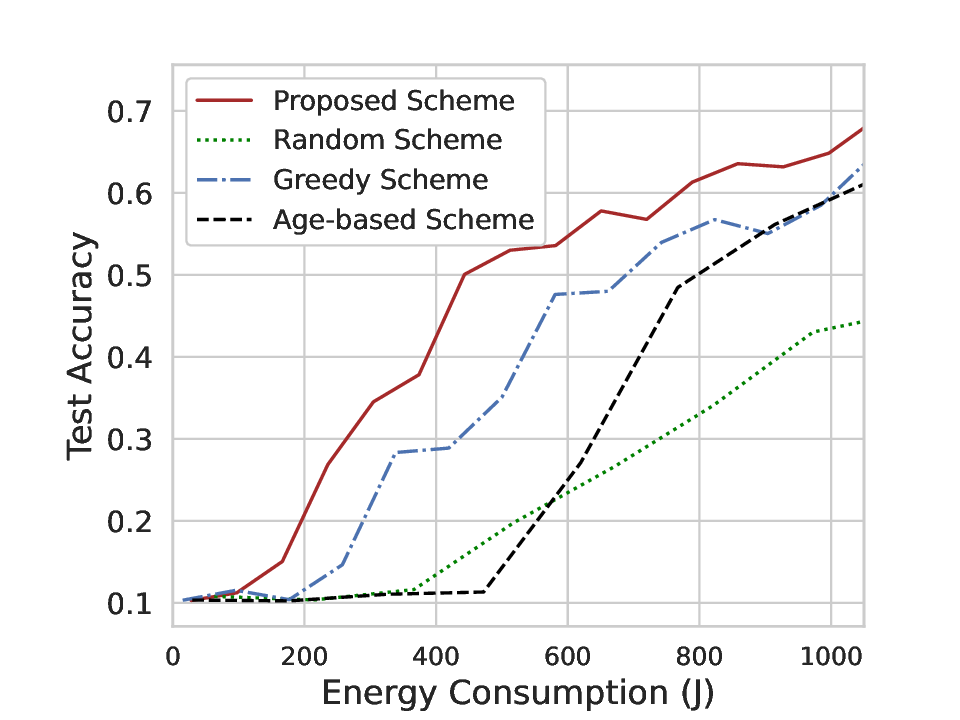} 
  } 
  \centering
  \subfigure[Test accuracy v.s. energy consumption when the average number of clients participating per round is 2 with CIFAR-10 dataset.]{ 
    \includegraphics[clip, viewport= 30 0 420 360,width=4.1cm]{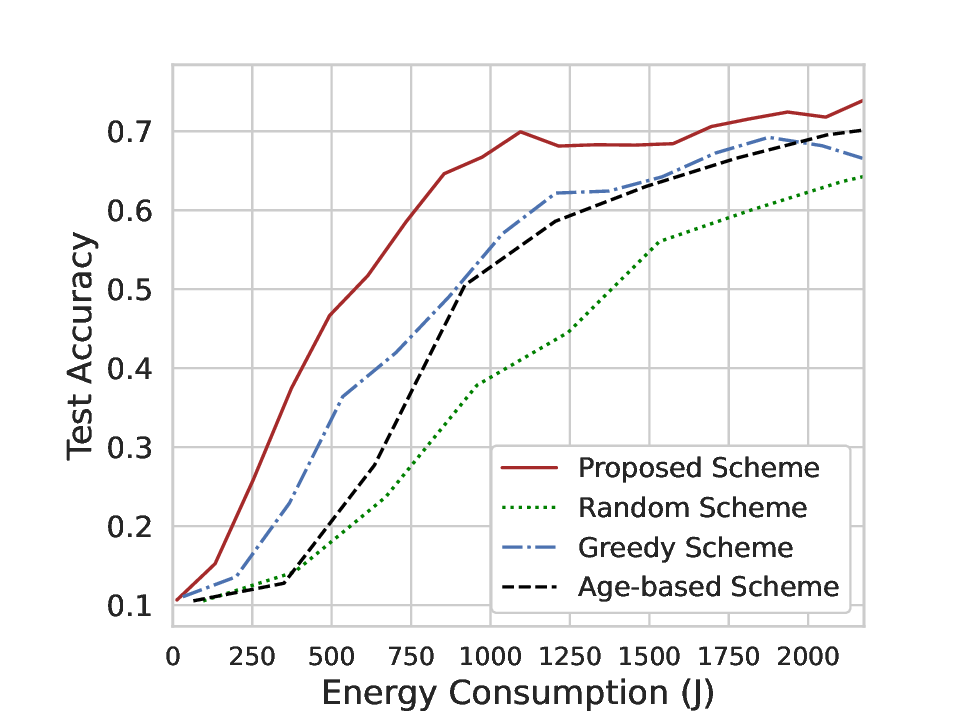} 
  } 
  \caption{The asynchronous FL convergence performance under different schemes.} 
\label{fig:3_1}
\end{figure}

\begin{figure}[t]
  \centering 

  \subfigure[Test accuracy v.s. energy consumption when the number of clients is 20 with MNIST dataset.]{ 
    \includegraphics[clip, viewport= 30 0 420 360,width=4.1cm]{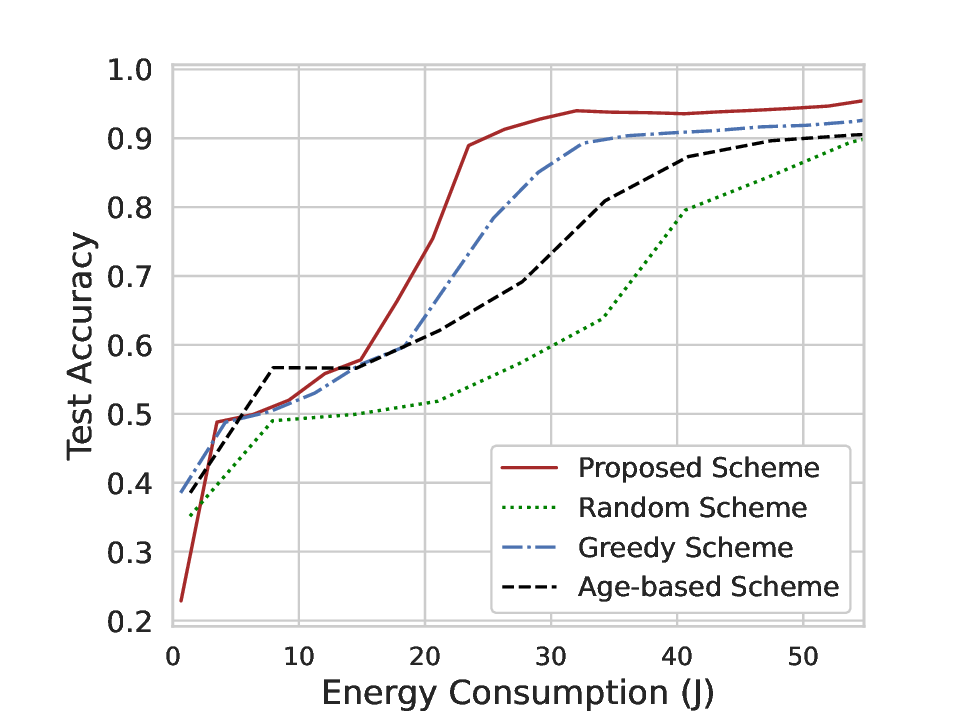} 
  } 
  \centering
  \subfigure[Test accuracy v.s. energy consumption when the number of clients is 30 with MNIST dataset.]{ 
    \includegraphics[clip, viewport= 30 0 420 360,width=4.1cm]{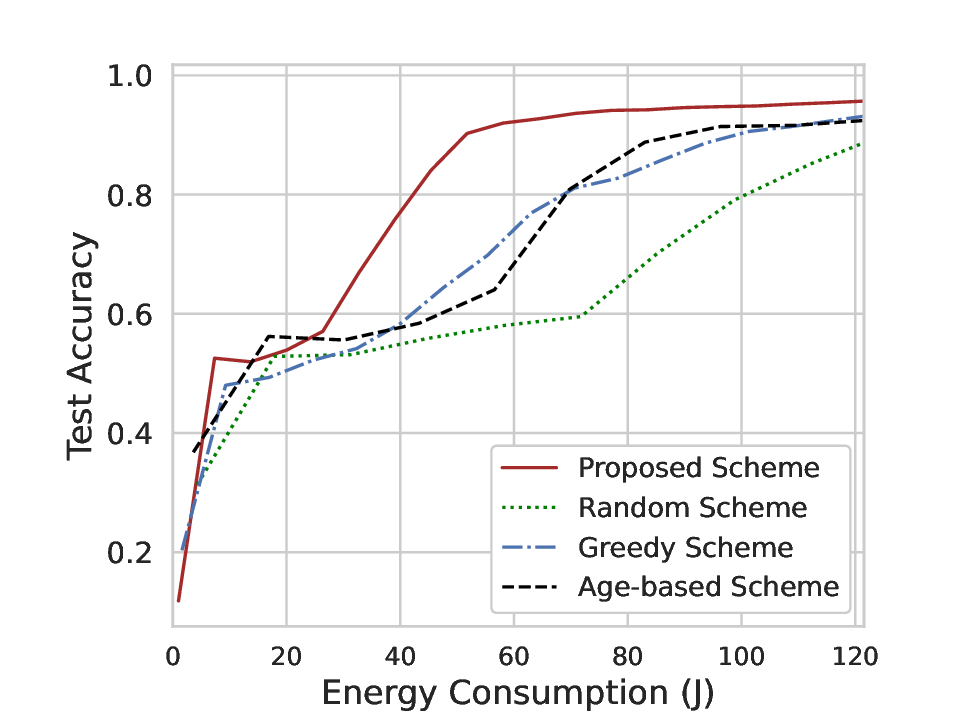} 
  } 
  \\
  \centering 
  \subfigure[Test accuracy v.s. energy consumption when the number of clients is 20 with CIFAR-10 dataset.]{ 
    \includegraphics[clip, viewport= 30 0 420 360,width=4.1cm]{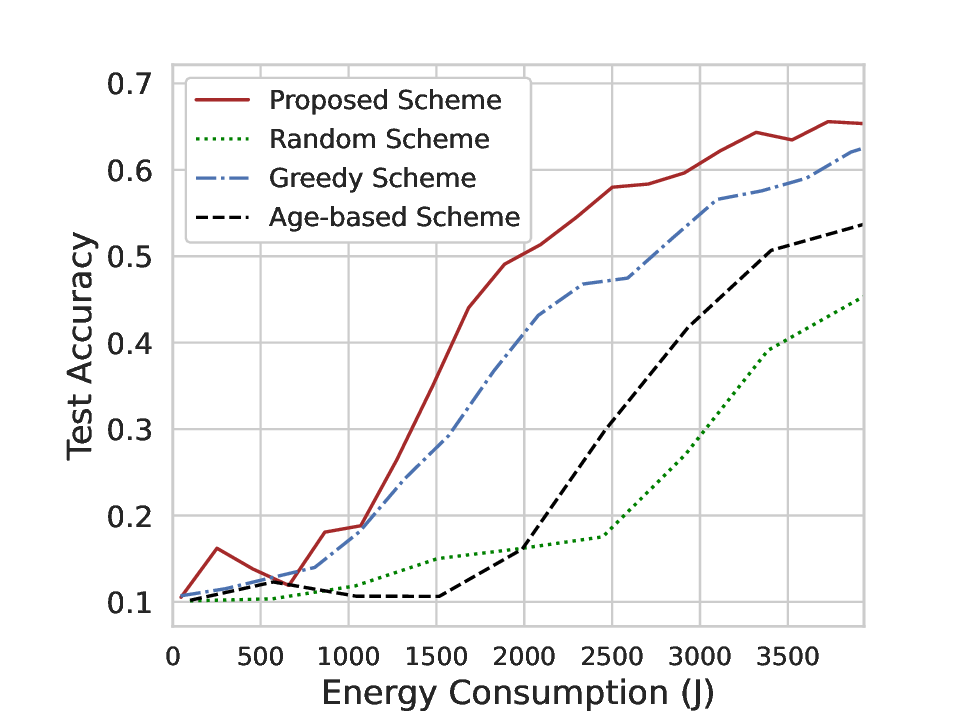} 
  } 
  \centering
  \subfigure[Test accuracy v.s. energy consumption when the number of clients is 30 with CIFAR-10 dataset.]{ 
    \includegraphics[clip, viewport= 30 0 420 360,width=4.1cm]{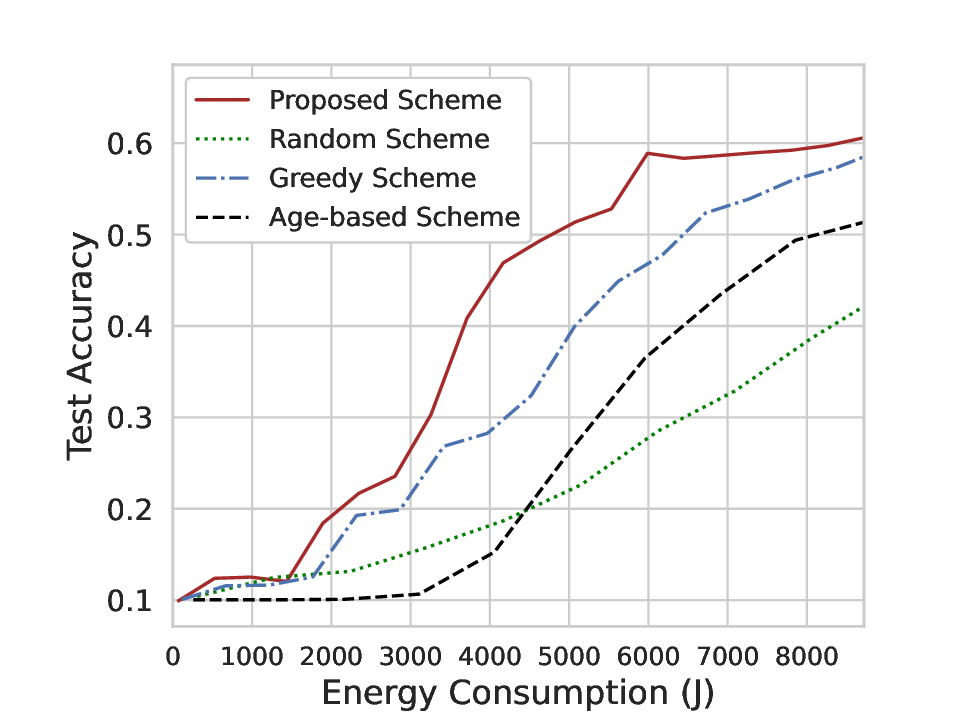} 
  } 
  \caption{The asynchronous FL convergence performance under different number of clients.} 
\label{fig:3_2}
\end{figure}
To show the effectiveness of the scheme, we compare the proposed scheme with several benchmark schemes introduced in Section \ref{Experiment_Settings}. 
Fig. \ref{fig:3_0} shows the variation of the total energy consumption over $100$ communication rounds with the average number of clients involved per round, where the number of clients is set to $10$. Fig. \ref{fig:3_5} shows the changes in total energy consumption over $100$ communication rounds as the number of clients varies, where the client participation rate is uniformly set to $0.1$. We can observe that the proposed scheme significantly reduces the energy consumption. Fig. \ref{fig:3_1} and Fig. \ref{fig:3_2} show the learning performance of four schemes under MNIST and CIFAR-10, where the non-IID level $d$ is set to $5$ and the total number of clients participating in the model training is set uniformly for fair comparison. Specifically, the average number of clients participating per round is set to $1$ or $2$ in Fig. \ref{fig:3_1}, and the client participation rate is uniformly set to $0.1$ in Fig. \ref{fig:3_2}. From Fig. \ref{fig:3_1} and Fig. \ref{fig:3_2}, we can observe that the random scheme has the worst performance and the proposed scheme has the highest accuracy for the same energy consumption. This indicates that the proposed scheme effectively reduces the energy consumption by joint bandwidth allocation and probabilistic client selection.

\subsection{Adaptability to Varying Network Conditions}\label{section_D}
\begin{figure}[t]
  \centering 
  \subfigure[Test accuracy v.s. energy consumption in Scenario 1 with MNIST dataset.]{ 
    \includegraphics[clip, viewport= 30 0 420 360,width=4.1cm]{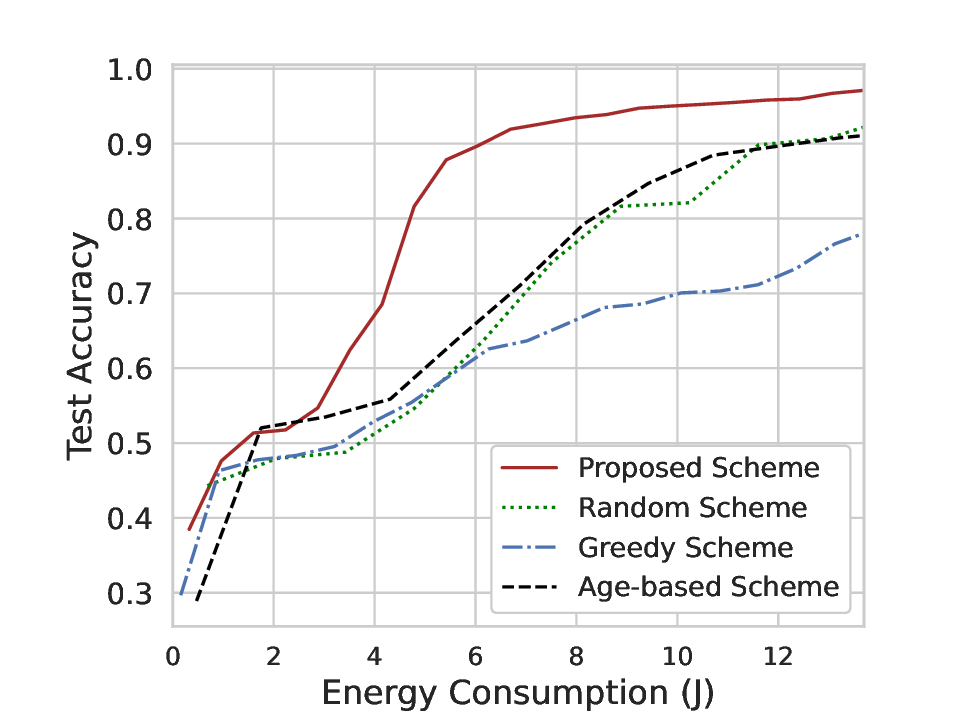} 
  } 
 \centering
  \subfigure[Test accuracy v.s. energy consumption in Scenario 2 with MNIST dataset.]{ 
    \includegraphics[clip, viewport= 30 0 420 360,width=4.1cm]{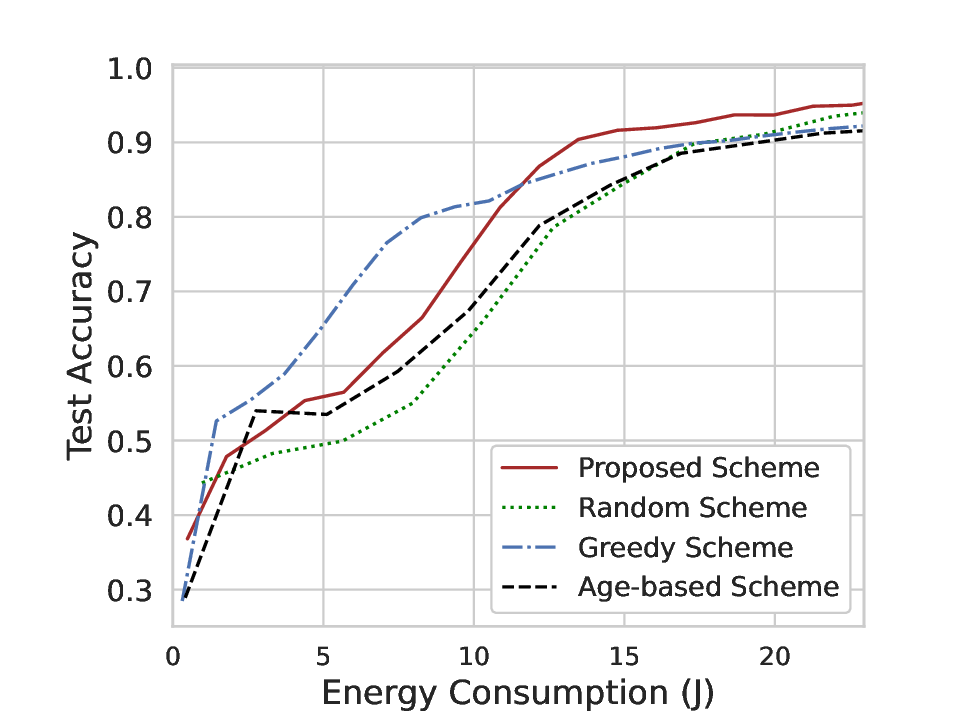} 
  } 
  \\
  \centering 
  \subfigure[Test accuracy v.s. energy consumption in Scenario 1 with CIFAR-10 dataset.]{ 
    \includegraphics[clip, viewport= 30 0 420 360,width=4.1cm]{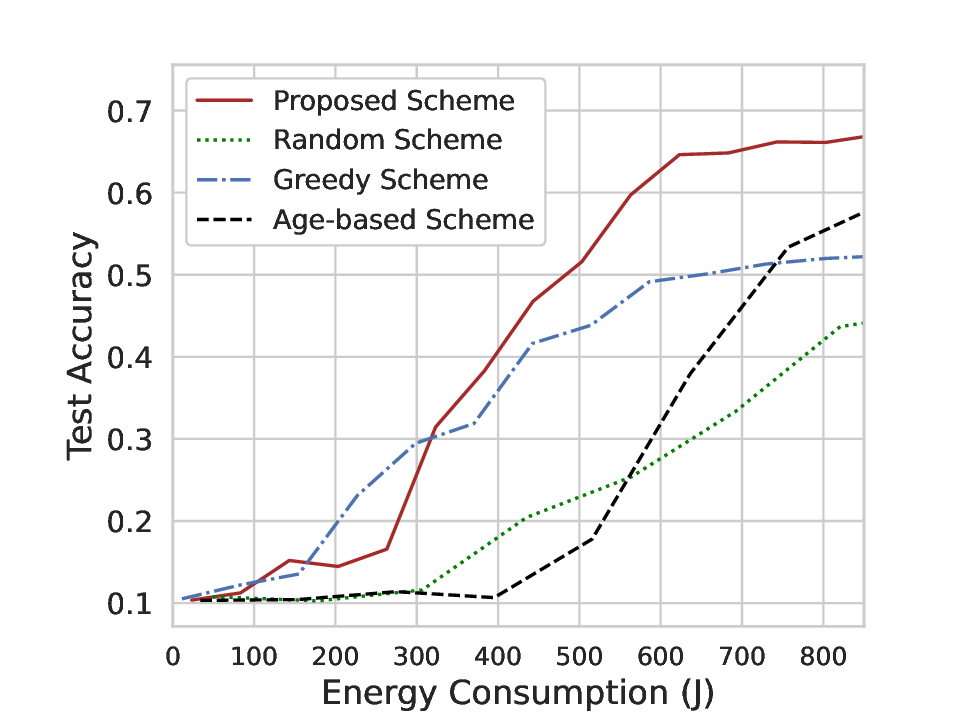} 
  } 
\centering
  \subfigure[Test accuracy v.s. energy consumption in Scenario 2 with CIFAR-10 dataset.]{ 
    \includegraphics[clip, viewport= 30 0 420 360,width=4.1cm]{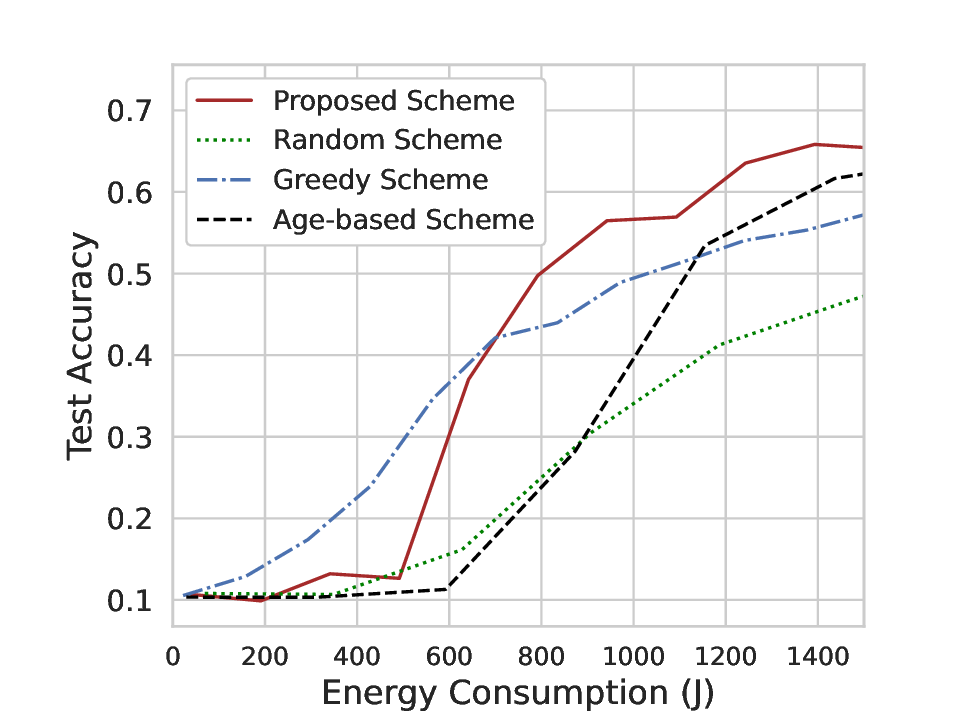} 
  } 
  \caption{The asynchronous FL convergence performance under different schemes for the two scenarios.} 
\label{fig:4}
\end{figure}

\begin{figure}[t]
  \centering 
  \subfigure[The per-client total energy consumption in scenario 1.]{ 
    \hspace{-1cm}
    \includegraphics[width=10.6cm]{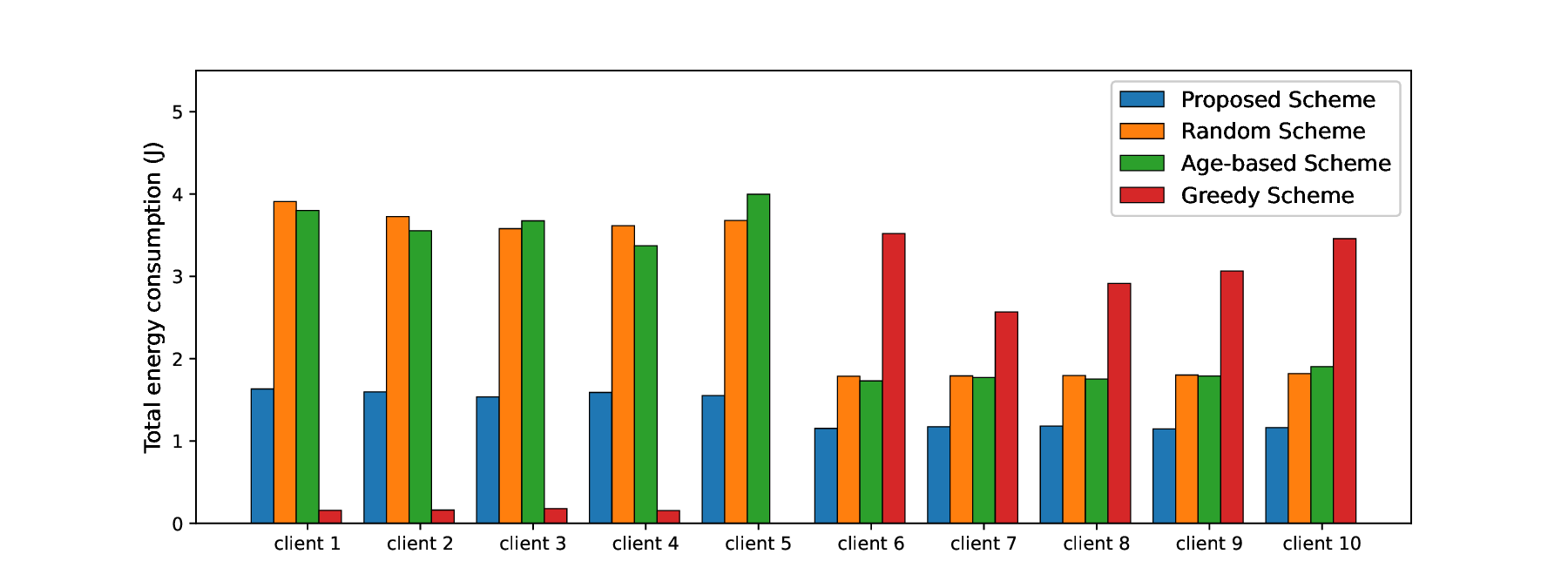} 
  }
   \centering
  \subfigure[The per-client total energy consumption in scenario 2.]{ 
    \hspace{-1cm}
    \includegraphics[width=10.6cm]{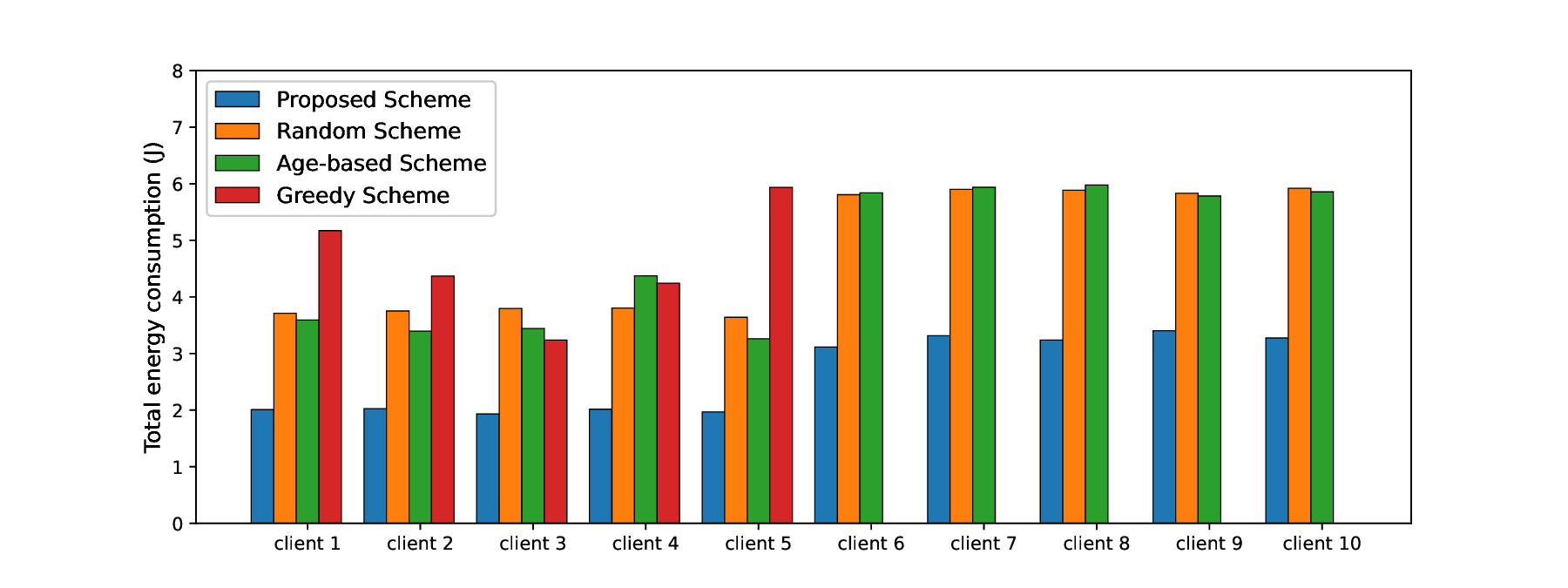} 
  } 
  \caption{The per-client total energy consumption under different schemes for the two scenarios with MNIST dataset.} 
\label{fig:5}
\end{figure}

In the above experiments, we consider that the locations of clients are distributed randomly and thus clients can be fairly selected with best channel conditions in the greedy scheme. However, in practice, clients may not be highly mobile, so here we consider two scenarios with extremely distributed clients. In Scenario 1, client 1 to client 5 are distributed in an area with a radius of $100$ m to $200$ m from the server and the rest of the clients are randomly distributed in the cell, mimicking a scenario where some clients are always near around the server. In Scenario 2, client 1 to client 5 are distributed in an area with a radius of $900$ m to $1000$ m from the server and the rest of the clients are randomly distributed in the cell, mimicking a scenario where some clients always far from the server.
We train the model in two scenarios using the MNIST dataset and CIFAR-10 dataset and set the average number of clients participating per round is 1, where the non-IID level $d$ is set to 5. Fig. \ref{fig:4} shows the model accuracy for both scenarios. We can observe that the greedy scheme exhibits poor model performance and even lower accuracy than the random scheme in MNIST dataset for the same energy consumption in the end of the training process, while our proposed scheme still maintains the optimal accuracy. Fig. \ref{fig:5} shows the total energy consumption per client under the proposed scheme and two benchmark schemes for the two scenarios with MNIST dataset. From Fig. \ref{fig:5}, we can observe that the greedy scheme selects the extremely distributed clients multiple times in Scenario 1, while in Scenario 2, extremely distributed clients are completely ignored, indicating that the greedy scheme will only naively select the clients with good channel conditions, thus resulting in unfair client participation and making the global model drift towards local optimum. 
In contrast, the total energy consumption of each client in the proposed scheme is more balanced and similar to the random scheme and the age-based scheme in fairness. Meanwhile, since the proposed scheme involves channel-aware client selection, the per-client total energy consumption of the proposed scheme is much lower than that of the random scheme.

\section{Conclusion}
\label{sec:Conclusion}
In this paper, we studied asynchronous FL over wireless networks by joint probabilistic client selection and bandwidth allocation. A bound of convergence rate for asynchronous FL with probabilistic client selection was derived. Then, a joint optimization problem of probabilistic client selection and bandwidth allocation was formulated to trade off the convergence performance of the asynchronous FL and energy consumption of the clients. We proposed an iterative algorithm to find the globally optimal solution of the non-convex problem. Comprehensive experiments showed that the proposed scheme can improve both model convergence and energy efficiency.

\section*{Appendix}
\appendices
\subsection{Proof of Lemma \ref{lemma1}} 
We start with two definitions according to \cite{avdiukhin2021federated}. 
\begin{definition}
\textbf{Global model parameters sequence}
\begin{align}
    \boldsymbol{x}_{t} = \boldsymbol{x}_0 - \frac{\eta}{K}\sum_{k=1}^K\sum_{t^{\prime}=0}^{\pi_{k,t}} \nabla F_{k,t^{\prime}},
\end{align}
where $K$ is the number of clients and $\pi_{k,t}$ is the last round where the client $k$ communicates with the server before round $t$.
\end{definition}
\begin{definition}
\textbf{Virtual sequence}
\begin{align}
    \boldsymbol{z}_{t}=\boldsymbol{x}_0 - \sum_{t^{\prime}=0}^{t-1}\frac{\eta}{K}\sum_{k=1}^K \nabla F_{k,t^{\prime}}.
\end{align}
\end{definition}
\par
Then we have the following lemma.
\begin{lemma}\label{lemma2}
The distance of the global model parameters and virtual sequence can be bounded as 
\begin{align}\label{eqn:lemma2_1}
\mathbb{E}[\|\boldsymbol{z}_T-\boldsymbol{x}_T\|^2] &\leq \frac{\eta^2}{K}\mathbf{G}_{max}^2\sum_{k=1}^K \Delta_k^2,
\end{align}
\begin{align}\label{eqn:lemma2_2}
\mathbb{E}[\|\boldsymbol{x}_{k,\pi_{k,T}}-\boldsymbol{x}_{k,T}\|^2] &\leq \eta^2\mathbf{G}_{max}^2\Delta_k^2,
\end{align}
\begin{align}
\mathbb{E}[\|\boldsymbol{x}_{k,t}-\boldsymbol{z}_t\|^2] &\leq \left[\frac{6\eta^2}{K}\mathbf{G}_{max}^2\sum_{k=1}^K \Delta_k^2 + 3\eta^2\mathbf{G}_{max}^2\Delta_k^2\right],
\end{align}
where $\mathbb{E}[\cdot]$ is the expectation of the randomness results from stochastic gradients.
\end{lemma}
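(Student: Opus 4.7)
\textbf{Proof plan for Lemma \ref{lemma2}.} My strategy for all three inequalities is a uniform two-step toolkit: first express each displacement as a sum of stochastic gradients on a window of rounds whose length is controlled by the $\Delta_k$'s, and then apply the convexity inequality $\|\sum_{i=1}^n a_i\|^2\le n\sum_{i=1}^n\|a_i\|^2$ together with the uniform bound $\mathbb{E}[\|\nabla F_{k,t'}\|^2]\le\mathbf{G}_{max}^2$ from Assumption \ref{assumption1}.

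For \eqref{eqn:lemma2_1}, subtracting the two definitions cancels every gradient except the client-$k$ gradients in the window $(\pi_{k,T},T-1]$, giving
\[
\boldsymbol{z}_T-\boldsymbol{x}_T=-\frac{\eta}{K}\sum_{k=1}^K\sum_{t'=\pi_{k,T}+1}^{T-1}\nabla F_{k,t'}.
\]
Each inner window has at most $\Delta_k$ terms, so applying the convexity inequality first across the $K$ clients and then across the inner window, and taking expectations, yields the stated bound. For \eqref{eqn:lemma2_2}, between consecutive synchronizations client $k$ performs only local SGD steps, so $\boldsymbol{x}_{k,T}-\boldsymbol{x}_{k,\pi_{k,T}}=-\eta\sum_{t'=\pi_{k,T}}^{T-1}\nabla F_{k,t'}$ is a single sum of at most $\Delta_k$ terms, and the same argument produces $\eta^2\mathbf{G}_{max}^2\Delta_k^2$.

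For the third inequality I would use that $\boldsymbol{x}_{k,\pi_{k,t}}=\boldsymbol{x}_{\pi_{k,t}}$ right after synchronization and decompose
\[
\boldsymbol{x}_{k,t}-\boldsymbol{z}_t=\bigl(\boldsymbol{x}_{\pi_{k,t}}-\boldsymbol{z}_{\pi_{k,t}}\bigr)+\bigl(\boldsymbol{z}_{\pi_{k,t}}-\boldsymbol{z}_t\bigr)-\eta\sum_{s=\pi_{k,t}}^{t-1}\nabla F_{k,s}.
\]
Applying $\|a+b+c\|^2\le 3(\|a\|^2+\|b\|^2+\|c\|^2)$, the first piece is bounded by \eqref{eqn:lemma2_1} at time $\pi_{k,t}$, the middle piece $\frac{\eta}{K}\sum_j\sum_{s=\pi_{k,t}}^{t-1}\nabla F_{j,s}$ is handled by the same Jensen/gradient-norm argument on a window of length $\le\Delta_k$, and the final piece is exactly the object controlled by \eqref{eqn:lemma2_2}. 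The main obstacle I expect is pinning down the stated constants $6$ and $3$ exactly: every natural decomposition yields the same two term types $\frac{\eta^2}{K}\mathbf{G}_{max}^2\sum_j\Delta_j^2$ and $\eta^2\mathbf{G}_{max}^2\Delta_k^2$, but their coefficients depend sensitively on the choice of split and on which piece absorbs the window-length factor $\Delta_k$; an alternative two-way split or a careful avoidance of the $(\Delta_j+\Delta_k)^2$ slack in the middle term may be necessary. Once the right decomposition is identified, the remaining steps are routine applications of convexity and the stochastic-gradient bound.
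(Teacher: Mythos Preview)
Your treatment of \eqref{eqn:lemma2_1} and \eqref{eqn:lemma2_2} is exactly the paper's: write the displacement as a sum of at most $\Delta_k$ stochastic gradients per client, apply Jensen, and use the bound $\mathbb{E}[\|\nabla F_{k,t'}\|^2]\le\mathbf{G}_{max}^2$.

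For the third inequality your three–term split is valid but differs from the paper's, and this is precisely why your constants come out as $(3,6)$ rather than $(6,3)$. The paper anchors at the \emph{current} global model $\boldsymbol{x}_t$ rather than at the virtual sequence $\boldsymbol{z}_{\pi_{k,t}}$: writing $t'=\pi_{k,t}$ and using $\boldsymbol{x}_{k,t'}=\boldsymbol{x}_{t'}$, it decomposes
\[
\boldsymbol{x}_{k,t}-\boldsymbol{z}_t
=-\bigl(\boldsymbol{z}_t-\boldsymbol{x}_t\bigr)-\bigl(\boldsymbol{x}_t-\boldsymbol{x}_{t'}\bigr)-\bigl(\boldsymbol{x}_{k,t'}-\boldsymbol{x}_{k,t}\bigr),
\]
then bounds $\|\boldsymbol{z}_t-\boldsymbol{x}_t\|^2$ and $\|\boldsymbol{x}_t-\boldsymbol{x}_{t'}\|^2$ each by $\frac{\eta^2}{K}\mathbf{G}_{max}^2\sum_{j}\Delta_j^2$ and $\|\boldsymbol{x}_{k,t'}-\boldsymbol{x}_{k,t}\|^2$ by $\eta^2\mathbf{G}_{max}^2\Delta_k^2$, yielding $3\cdot(2,1)=(6,3)$. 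Your split instead produces one piece of the first type and two of the second, hence $(3,6)$. Neither bound dominates the other pointwise, and after averaging over $k$ in the proof of Lemma~\ref{lemma1} both collapse to the same $\frac{9\eta^2}{K}\mathbf{G}_{max}^2\sum_j\Delta_j^2$, so your route is equally usable downstream; it just does not reproduce the lemma's stated coefficients verbatim. If you want those exact constants, swap your anchor from $\boldsymbol{z}_{\pi_{k,t}}$ to $\boldsymbol{x}_t$ as above.
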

We first prove Lemma \ref{lemma2} using Jensen's inequality.
Due to the convexity of L2 norm $\|\nabla F_{k,t}\|^2$, we bound $\mathbb{E}[\|\boldsymbol{z}_T-\boldsymbol{x}_T\|^2]$ using Jensen’s inequality as
\begin{align}
    \mathbb{E}[\|\boldsymbol{z}_T-\boldsymbol{x}_T\|^2] &= \mathbb{E}\left[\left\|\frac{1}{K}\sum_{k=1}^K\left(\sum_{t=\pi_{k,T}}^{T-1}\eta \nabla F_{k,t}\right)\right\|^2\right] \\
     &\leq \frac{\eta^2}{K}\sum_{k=1}^K (T-\pi_{k,t})\sum_{t=\pi_{k,T}}^{T-1}\mathbb{E}[\|\nabla F_{k,t}\|^2].
\end{align}
Since $\pi_{k,t}$ is the last round where the client $k$ communicates with the server before round $t$ and $\Delta_k$ is the maximum communication interval between each client $k$ and the server, we have $T-\Delta_k \leq \pi_{k,t}$ and the term $\mathbb{E}[\|\boldsymbol{z}_T-\boldsymbol{x}_T\|^2]$ can be further bounded as
\begin{align}
    \mathbb{E}[\|\boldsymbol{z}_T-\boldsymbol{x}_T\|^2] 
    &\leq \frac{\eta^2}{K}\sum_{k=1}^K\Delta_k\sum_{t=T-\Delta_k}^{T-1}\mathbb{E}[\|\nabla F_{k,t}\|^2] \\
    & \leq  \frac{\eta^2}{K}\mathbf{G}_{max}^2 \sum_{k=1}^K\Delta_k^2.\label{eqn:lemma2_44}
\end{align}
\par
Then we rewrite the term $\mathbb{E}[\|\boldsymbol{x}_{k,\pi_{k,T}}-\boldsymbol{x}_{k,T}\|^2]$ as
\begin{align}
    \mathbb{E}&[\|\boldsymbol{x}_{k,\pi_{k,T}}-\boldsymbol{x}_{k,T}\|^2] = \mathbb{E}\left[\left\|\sum_{t=\pi_{k,T}}^{T-1}\eta\nabla F_{k,t}\right\|^2\right]
    \nonumber\\
    &\leq \eta^2\Delta_k\sum_{t=T-\Delta_k}^{T-1}\mathbb{E}[\|\nabla F_{k,t}\|^2] 
    \leq \eta^2 \mathbf{G}_{max}^2\Delta_k^2.
\end{align}
Define a local model parameter $\boldsymbol{x}_{k,t^\prime}$, where the client $k$ will communicate with the server in round $t^\prime$, thus we have $\boldsymbol{x}_{k,t^\prime}=\boldsymbol{x}_{t^\prime}$. Then we further rewrite the term $\mathbb{E}[\|\boldsymbol{x}_{k,t}-\boldsymbol{z}_t\|^2]$ as 
\begin{align}
    \mathbb{E}[\|\boldsymbol{x}_{k,t}-\boldsymbol{z}_t\|^2] &= \mathbb{E}[\|\boldsymbol{z}_t-\boldsymbol{x}_t-\boldsymbol{x}_{k,t}+\boldsymbol{x}_{k,t^\prime}-\boldsymbol{x}_{t^\prime}+\boldsymbol{x}_t\|^2]\\
    &\leq 3\mathbb{E}[\|\boldsymbol{z}_t-\boldsymbol{x}_t\|^2+\|\boldsymbol{x}_{k,t^\prime}-\boldsymbol{x}_{k,t}\|^2 \nonumber\\
    &~~+ \|\boldsymbol{x}_t-\boldsymbol{x}_{t^\prime}\|^2], \label{eqn:lemma2_3}
\end{align}
Using \eqref{eqn:lemma2_1} and \eqref{eqn:lemma2_2}, we can further bound \eqref{eqn:lemma2_3} as
\begin{align}
    &\leq 3\left[\frac{\eta^2}{K}\mathbf{G}_{max}^2 \sum_{k=1}^K\Delta_k^2 + \eta^2 \mathbf{G}_{max}^2\Delta_k^2 + \frac{\eta^2}{K}\mathbf{G}_{max}^2 \sum_{k=1}^K\Delta_k^2\right]\\
    &\leq \frac{6\eta^2}{K}\mathbf{G}_{max}^2 \sum_{k=1}^K\Delta_k^2 +3\eta^2 \mathbf{G}_{max}^2\Delta_k^2,
\end{align}
which completes the proof of Lemma \ref{lemma2}.
\par
Then we start to prove Lemma \ref{lemma1}. When assumption \ref{assumption1} holds, the decrease of the loss function for the virtual sequence $\{\boldsymbol{z}_t\}_{t=0}^{T-1}$ can be bounded as
\begin{align}\label{eqn:decrease_z}
    \mathbb{E}[f(\boldsymbol{z}_{t+1})]\leq& \mathbb{E}[f(\boldsymbol{z}_t)] - \mathbb{E}\left[\left<\nabla f(\boldsymbol{z}_t), \frac{\eta}{K}\sum_{k=1}^K\nabla F_{k,t}\right>\right] \nonumber\\
    &+ \frac{L}{2} \mathbb{E}\left[\left\|\frac{\eta}{K}\sum_{k=1}^K\nabla F_{k,t} \right \|^2\right].
\end{align}
The last term can be further bounded as 
\begin{align}
\frac{L}{2} \mathbb{E}&\left[\left\|\frac{\eta}{K}\sum_{k=1}^K\nabla F_{k,t} \right \|^2\right] \nonumber\\
&= \frac{\eta^2L}{2}\mathbb{E}\left[\left\|\frac{1}{K}\sum_{k=1}^K\left(\nabla f_k(\boldsymbol{x}_{k,t})+(\nabla F_{k,t} - \nabla f_k(\boldsymbol{x}_{k,t}))\right)\right\|^2\right] \\
&\leq \underbrace{\eta^2L\mathbb{E}\left[\left\|\frac{1}{K}\sum_{k=1}^K\nabla f_k(\boldsymbol{x}_{k,t})\right\|^2\right]}_{(a)} \nonumber\\
&~~+ \underbrace{\eta^2L\mathbb{E}\left[\left\|\frac{1}{K}\sum_{k=1}^K(\nabla F_{k,t}-\nabla f_k(\boldsymbol{x}_{k,t}))\right\|^2\right]}_{(b)}.
\end{align}
We further bound the term $(a)$ as
\begin{align}
(a) &\leq \eta^2L\mathbb{E}\left[\left\|\frac{1}{K}\sum_{k=1}^K(\nabla f_k(\boldsymbol{x}_{k,t})-\nabla f_k(\boldsymbol{z}_t)+\nabla f_k(\boldsymbol{z}_t))\right\|^2\right] \\
&\leq \frac{2\eta^2L}{K}\sum_{k=1}^K\mathbb{E}[\|\nabla f_k(\boldsymbol{x}_{k,t})-\nabla f_k(\boldsymbol{z}_t)\|^2]\nonumber\\
&~~+2\eta^2L\mathbb{E}\left[\left\|\frac{1}{K}\sum_{k=1}^K\nabla f_k(\boldsymbol{z}_t)\right\|^2\right] \label{eqn:a_1}\\
&\leq \frac{2\eta^2L}{K}\sum_{k=1}^K L^2 \mathbb{E} [\|\boldsymbol{x}_{k,t}-\boldsymbol{z}_t\|^2] + 2\eta^2L\mathbb{E}[\|\nabla f(\boldsymbol{z}_t)\|^2] \label{eqn:a_2}\\
&\leq \frac{2\eta^2L^3}{K}\sum_{k=1}^K \left[\frac{6\eta^2}{K}\mathbf{G}_{max}^2 \sum_{k=1}^K\Delta_k^2 +3\eta^2 \mathbf{G}_{max}^2\Delta_k^2\right] \nonumber\\
&~~+ 2\eta^2L\mathbb{E}[\|\nabla f(\boldsymbol{z}_t)\|^2] \label{eqn:a_3}\\
&\leq 2\eta^2L^3\left[\frac{6\eta^2}{K}\mathbf{G}_{max}^2 \sum_{k=1}^K\Delta_k^2 + \frac{3\eta^2}{K}\mathbf{G}_{max}^2\sum_{k=1}^K\Delta_k^2\right]\nonumber\\
&~~+2\eta^2L\mathbb{E}[\|\nabla f(\boldsymbol{z}_t)\|^2]\\
&\leq \frac{18\eta^4L^3\mathbf{G}_{max}^2}{K}\sum_{k=1}^K\Delta_k^2 + 2\eta^2L\mathbb{E}[\|\nabla f(\boldsymbol{z}_t)\|^2],
\end{align}
where \eqref{eqn:a_1} is obtained by Cauchy-Schwarz inequality and Jensen's inequality, \eqref{eqn:a_2} is obtained using the property of L-Lipschitz continuous and \eqref{eqn:a_3} is bounded according to Lemma \ref{lemma2}.
\par
Then we bound the term $(b)$ as 
\begin{align}
(b) &\leq \frac{\eta^2L}{K}\sum_{k=1}^K\mathbb{E}[\|\nabla F_{k,t} - \nabla  f_k(\boldsymbol{x}_{k,t})\|^2] \label{eqn:b_1} \\ 
&\leq \eta^2L\sigma^2, \label{eqn:b_2}
\end{align}
where \eqref{eqn:b_1} is obtained by Jensen's inequality and \eqref{eqn:b_2} is obtained using Assumption \ref{assumption1}.
\par
Then we further bound the term $-\mathbb{E}\left[\left<\nabla f(\boldsymbol{z}_t), \frac{\eta}{K}\sum_{k=1}^K\nabla F_{k,t}\right>\right]$ in \eqref{eqn:decrease_z} as
\begin{align}
&-\mathbb{E}\left[\left<\nabla f(\boldsymbol{z}_t), \frac{\eta}{K}\sum_{k=1}^K\nabla F_{k,t}\right>\right] \\
=&  -\eta\mathbb{E}\left[\left<\nabla f(\boldsymbol{z}_t), \frac{1}{K}\sum_{k=1}^K\nabla f_k(\boldsymbol{z}_t)\right>\right] \nonumber\\
&~~- \eta\mathbb{E}\left[\left<\nabla f(\boldsymbol{z}_t), \frac{1}{K}\sum_{k=1}^K(\nabla F_{k,t}-\nabla f_k(\boldsymbol{z}_t))\right>\right]\\
\leq& -\eta\mathbb{E}[\|\nabla f(\boldsymbol{z}_t)\|^2] +\frac{\eta}{2}\mathbb{E}[\|\nabla f(\boldsymbol{z}_t)\|^2] \nonumber\\
&~~+ \frac{\eta}{2}\mathbb{E}\left[\left\|\frac{1}{K}\sum_{k=1}^K(\nabla F_{k,t}-\nabla f_k(\boldsymbol{z}_t))\right\|^2\right] \label{eqn:hip_1}\\
\leq& -\frac{\eta}{2}\mathbb{E}[\|\nabla f(\boldsymbol{z}_t)\|^2] + \frac{\eta}{2K}\sum_{k=1}^K\mathbb{E}[\|\nabla F_{k,t}-\nabla f_k(\boldsymbol{z}_t)\|^2], \label{eqn:hip_2}
\end{align}
where \eqref{eqn:hip_1} is obtained by Jensen's inequality. Then the last term of \eqref{eqn:hip_2} can be further bounded as
\begin{align}
&\frac{\eta}{2K}\sum_{k=1}^K\mathbb{E}[\|\nabla F_{k,t}-\nabla f_k(\boldsymbol{z}_t)\|^2]\\
\leq& \frac{\eta}{2K}\sum_{k=1}^K\mathbb{E}[\|\nabla F_{k,t}-\nabla f_k(\boldsymbol{x}_{k,t})+\nabla f_k(\boldsymbol{x}_{k,t})-\nabla f_k(\boldsymbol{z}_t)\|^2] \\
\leq& \frac{\eta}{K}\sum_{k=1}^K\mathbb{E}[\|\nabla F_{k,t}-\nabla f_k(\boldsymbol{x}_{k,t})\|^2] \nonumber\\
&+ \frac{\eta}{K}\sum_{k=1}^K\mathbb{E}[\|\nabla f_k(\boldsymbol{x}_{k,t})-\nabla f_k(\boldsymbol{z}_t)\|^2] \label{eqn:hip_3}\\ 
\leq& \eta\sigma^2 + \frac{\eta L^2}{K}\sum_{k=1}^K\mathbb{E}[\|\boldsymbol{x}_{k,t}-\boldsymbol{z}_t\|^2] \label{eqn:hip_4}\\
\leq& \eta\sigma^2 + \frac{\eta L^2}{K}\sum_{k=1}^K\left[\frac{6\eta^2}{K}\mathbf{G}_{max}^2 \sum_{k=1}^K\Delta_k^2 + 3\eta^2\mathbf{G}_{max}^2\sum_{k=1}^K\Delta_k^2\right] \label{eqn:hip_5}\\
\leq& \eta\sigma^2 + \frac{9\eta^3L^2\mathbf{G}_{max}^2}{K}\sum_{k=1}^K\Delta_k^2,
\end{align}
where \eqref{eqn:hip_3} is obtained by Jensen's inequality, \eqref{eqn:hip_4} is obtained by Assumption \ref{assumption1} and \eqref{eqn:hip_5} is obtained by Lemma \ref{lemma2}. Combining the results above, the decrease of the loss function for the virtual sequence $\{\boldsymbol{z}_t\}_{t=0}^{T-1}$ can be bounded as
\begin{align}
 \mathbb{E}[f(\boldsymbol{z}_{t+1})]\leq& \mathbb{E}[f(\boldsymbol{z}_t)] - (\frac{1}{2}-2\eta L)\eta\mathbb{E}[\|\nabla f(\boldsymbol{z}_t)\|^2] \nonumber\\
&+ \eta \sigma^2(1+\eta L) + \frac{18\eta^4L^3\mathbf{G}_{max}^2}{K}\sum_{k=1}^K\Delta_k^2 \nonumber\\
&+ \frac{9\eta^3L^2\mathbf{G}_{max}^2}{K}\sum_{k=1}^K\Delta_k^2. \label{eqn:decrease_2}
\end{align}
Assume that the learning rate $\eta$ satisfies the constraint $\eta \leq \frac{1}{4L}$ such that \eqref{eqn:decrease_2} can be rewritten as
\begin{align}
(\frac{1}{2}-2\eta L)&\eta\mathbb{E}[\|\nabla f(\boldsymbol{z}_t)\|^2] \leq  \mathbb{E}[f(\boldsymbol{z}_t)]-\mathbb{E}[f(\boldsymbol{z}_{t+1})] \nonumber\\
&+ \frac{9(2\eta L+1)\eta^3L^2\mathbf{G}_{max}^2}{K}\sum_{k=1}^K\Delta_k^2 + \eta \sigma^2(1+\eta L).
\end{align}
Taking the sum over all rounds we have
\begin{align}
\frac{1}{T}\sum_{t=0}^{T-1} \mathbb{E}[\|\nabla f(\boldsymbol{z}_t)\|^2] \leq& \frac{f_{max}}{(\frac{1}{2}-2\eta L)T\eta} \nonumber\\
&+ \frac{9(2\eta L+1)\eta^3L^2\mathbf{G}_{max}^2\sum_{k=1}^K\Delta_k^2}{(\frac{1}{2}-2\eta L)K\eta} \nonumber\\
&+ \frac{\eta \sigma^2(1+\eta L)}{(\frac{1}{2}-2\eta L)\eta}, 
\end{align}
where $f_{max} = f(\boldsymbol{z}_0)-f(\boldsymbol{z}^*) = f(\boldsymbol{x}_0)-f(\boldsymbol{x}^*)$. Assume that $\eta \leq \frac{1}{8L}$, we have
\begin{align}
\frac{1}{T}\sum_{t=0}^{T-1} \mathbb{E}&[\|\nabla f(\boldsymbol{z}_t)\|^2] \nonumber\\
&\leq \frac{4f_{max}}{T\eta}+\frac{45}{K}\eta^2L^2\mathbf{G}_{max}^2\sum_{k=1}^K\Delta_k^2+\frac{9\sigma^2}{2}.\label{eqn:bound_1}
\end{align}
While the term $\mathbb{E}[\|\nabla f(\boldsymbol{z}_t)\|^2]$ can be bounded with $\mathbb{E}[\|\nabla f(\boldsymbol{x}_t)\|^2]$ as
\begin{align}
\mathbb{E}&[\|\nabla f(\boldsymbol{x}_t)\|^2] \nonumber\\
&\leq 2\mathbb{E}[\|\nabla f(\boldsymbol{z}_t)\|^2] +2\mathbb{E}[\|\nabla f(\boldsymbol{z}_t)-\nabla f(\boldsymbol{x}_t)\|^2] \\
&\leq 2\mathbb{E}[\|\nabla f(\boldsymbol{z}_t)\|^2] + 2L^2\mathbb{E}[\|\boldsymbol{z}_t-\boldsymbol{x}_t\|^2] \\
&\leq 2\mathbb{E}[\|\nabla f(\boldsymbol{z}_t) \|^2] + \frac{2}{K}\eta^2L^2\mathbf{G}_{max}^2\sum_{k=1}^K\Delta_k^2, \label{eqn:z_t_x}
\end{align}
Substituting \eqref{eqn:z_t_x} into \eqref{eqn:bound_1}, we can obtain the claimed bound,
\begin{align}
\frac{1}{T}\sum_{t=0}^{T-1} \mathbb{E}&[\|\nabla f(\boldsymbol{x}_t)\|^2] \nonumber\\
&\leq \frac{8f_{max}}{T\eta}+\frac{92}{K}\eta^2L^2\mathbf{G}_{max}^2\sum_{k=1}^K\Delta_k^2+9\sigma^2.
\end{align}

\subsection{Proof of Lemma \ref{lemma3}}
 According to $\Delta_k'$ defined in \eqref{eqn:delta_pik_app}, \eqref{eqn:conv_app} can be rewritten as $O\left(\frac{1}{K}\sum_{k=1}^K(\Delta_k')^2\right)$. We assume that the total number of communications between the server and clients is fixed, i.e., $\sum_{k=1}^K \frac{1}{\Delta_k'} = C$, where $C$ is a constant. According to Cauchy-Schwarz inequality, we can obtain the following inequality:
    \begin{align}
        \sum_{k=1}^K \left( \frac{1}{\sqrt{\Delta_k'}} \sqrt{\Delta_k'} \right)^2  &\leq  
        \sum_{k=1}^K\left( \frac{1}{\sqrt{\Delta_k'}}\right)^2  \sum_{k=1}^K \left( \sqrt{\Delta_k'}\right)^2, \\
        \frac{K}{C} &\leq \sum_{k=1}^K \Delta_k',\label{eqn:appendix_1}
    \end{align}
    where the equality is achieved when there exists a constant $\Delta$ such that $\Delta_k'=\Delta$ for all $k$. Then we can obtain the following inequality according to the arithmetic mean-quadratic mean inequality:
    \begin{align}
        \frac{\sum_{n=1}^K \Delta_k'}{K} \leq \sqrt{\frac{\sum_{n=1}^K (\Delta_k')^2}{K}},\label{eqn:appendix_2}
    \end{align}
     where the equality is achieved when there exists a constant $\Delta$ such that $\Delta_k'=\Delta$ for all $k$. Substituting \eqref{eqn:appendix_1} into \eqref{eqn:appendix_2}, we can obtain 
     \begin{align}
         \frac{1}{C} \leq \frac{\sum_{k=1}^K \Delta_k'}{K} \leq \sqrt{\frac{\sum_{n=1}^K (\Delta_k')^2}{K}},
     \end{align}
     which be simplified as
     \begin{align}
         \frac{K}{C^2} \leq \sum_{n=1}^K (\Delta_k')^2,
     \end{align}
     where the equality is achieved when there exists a constant $\Delta$ such that $\Delta_k'=\Delta$ for all $k$. Therefore, $\frac{1}{K}\sum_{k=1}^K(\Delta_k')^2$ achieves the lower bound when $\Delta_k'=\Delta$ for all $k$, which yields to the same transmission period and thus fair participation of clients. 

\subsection{Proof of Theorem \ref{theorem2}}
To prove this theorem, we first introduce the auxiliary variables $\boldsymbol{\beta} = \{\boldsymbol{\beta}_1,\boldsymbol{\beta}_2,\cdots,\boldsymbol{\beta}_K\}$ with $\boldsymbol{\beta}_k = \{\beta_{k,0}, \beta_{k,1},\cdots,\beta_{k,T-1}\}$ and $\boldsymbol{\gamma} = \{\gamma_1, \gamma_2,\cdots,\gamma_K\}$. Then problem (P1) can be transformed equivalently as
\begin{align}\label{eqn:t2proof}
 \min_{(\boldsymbol{p}, \boldsymbol{w}) \in \mathcal{F}} \quad & \sum_{k=1}^K \gamma_k + \sum_{t=0}^{T-1}\sum_{k=1}^K \beta_{k,t} \\
    {\rm{s.t.}} \quad& \frac{\rho T^2}{K(\sum_{t=0}^{T-1} p_{k,t})^2} \leq \gamma_k, 
     \quad\quad\quad~~\forall t\in\mathcal{T}  \label{eqn:t2proof_st1}\\
    &\frac{(1-\rho)p_{k,t} P_k S}{w_{k,t} W\log\left(1+\frac{P_k h_{k,t}}{w_{k,t} W N_0}\right)} \leq \beta_{k, t},\nonumber\\
&\quad\quad\quad\quad\quad\quad\quad\quad\quad\quad\forall t\in\mathcal{T},\forall k\in\mathcal{K}\label{eqn:t2proof_st2}
\end{align}
where $\mathcal{F}$ denotes the set of $(\boldsymbol{p}, \boldsymbol{w})$ satisfying the convex constraints \eqref{eqn:p1_2}-\eqref{eqn:p1_4}. Clearly, the optimal $(\boldsymbol{p}^*, \boldsymbol{w}^*, \boldsymbol{\beta}^*, \boldsymbol{\gamma}^*)$ to problem \eqref{eqn:t2proof} satisfies the following conditions:
\begin{align}\label{eqn:t2proof2}
\left\{
\begin{aligned}
&\beta_{k,t}^* = \frac{(1-\rho)p_{k,t}^*P_k S}{w_{k,t}^* W\log\left(1+\frac{P_k h_{k,t}}{w_{k,t}^* W N_0}\right)}, \quad~\forall t\in\mathcal{T},\forall k\in\mathcal{K}\\
&\gamma_k^* = \frac{\rho T^2}{K(\sum_{t=0}^{T-1} p_{k,t}^*)^2}. \quad\quad\quad\quad\quad~~~\forall k\in\mathcal{K}
\end{aligned}
\right.
\end{align}
Then we introduce Lagrangian multipliers $\boldsymbol{\alpha} = \{\boldsymbol{\alpha}_1,\boldsymbol{\alpha}_2,\cdots,\boldsymbol{\alpha}_K\}$ with $\boldsymbol{\alpha}_k = \{\alpha_{k,0}, \alpha_{k,1},\cdots,\alpha_{k,T-1}\}$ and $\boldsymbol{\nu} = \{\nu_1, \nu_2,\cdots,\nu_K\}$ associated with constraints \eqref{eqn:t2proof_st1} and \eqref{eqn:t2proof_st2}, respectively. According to Fritz-John optimality condition, there must exist $(\boldsymbol{\alpha}^*, \boldsymbol{\nu}^*)$ such that
\begin{align}\label{eqn:t2proof3}
\left\{
\begin{aligned}
&\alpha_{k,t}^*=\frac{1}{w_{k,t}^* W\log\left(1+\frac{P_k h_{k,t}}{w_{k,t}^* W N_0}\right)}, ~\quad\forall t\in\mathcal{T},\forall k\in\mathcal{K}\\
& \nu_k^* = 1, \quad\quad\quad\quad\quad\quad\quad\quad\quad\quad\quad\quad \forall k\in\mathcal{K}
\end{aligned} 
\right.
\end{align}
where $(\boldsymbol{p}^*, \boldsymbol{w}^*, \boldsymbol{\beta}^*, \boldsymbol{\gamma}^*)$ is the optimal solution to problem \eqref{eqn:t2proof}. Then, it can be checked that when conditions of $(\boldsymbol{\alpha}^*, \boldsymbol{\beta}^*, \boldsymbol{\gamma}^*)$ in \eqref{eqn:t2proof2} and \eqref{eqn:t2proof3} are satisfied, the optimal $(\boldsymbol{p}^*, \boldsymbol{w}^*)$ of problem \eqref{eqn:t2proof} satisfies the Karush-Kuhn-Tucker (KKT) conditions for problem (P2), where $\boldsymbol{\nu}$ is omitted in problem (P2) since $\nu_k^* = 1,~\forall k\in\mathcal{K}$. As problem (P2) is convex programming for $\boldsymbol{\alpha}, \boldsymbol{\beta}, \boldsymbol{\gamma} \succ 0$, the KKT condition is also sufficient optimality condition. Thus, the optimal $(\boldsymbol{p}^*, \boldsymbol{w}^*)$ of problem \eqref{eqn:t2proof} is also the solution of problem (P2) for given $(\boldsymbol{\alpha}, \boldsymbol{\beta}, \boldsymbol{\gamma}) = (\boldsymbol{\alpha}^*, \boldsymbol{\beta}^*, \boldsymbol{\gamma}^*)$, which completes the proof. 
\subsection{The Details of Obtaining the Close Form Solution $w_{k,t}^*$}
By setting $\frac{\partial L_2}{\partial w_{k,t}} = 0$, we can obtain 
\begin{align}\label{eqn:C_1}
\alpha_{k,t}\beta_{k,t}W\log&\left(1+\frac{P_k h_{k,t}}{w_{k,t}WN_0}\right) \nonumber\\
&- \frac{\alpha_{k,t}\beta_{k,t}WP_k h_{k,t}}{w_{k,t}W N_0 +P_k h_{k,t}} - v_t = 0.
\end{align}
To simplify the notation, we set $\alpha_{k,t}\beta_{k,t}W = a$ and $\frac{P_kh_{k,t}}{WN_0} = b$. Then 
\eqref{eqn:C_1} can be simplified as 
\begin{align}
    a&\left(\log\left(\frac{w_{k,t}+b}{w_{k,t}}\right)-\left(\frac{a+v_t}{a}\right)\right)\exp\bigg(\log\left(\frac{w_{k,t}+b}{w_{k,t}}\right)\nonumber\\
&-\left(\frac{a+v_t}{a}\right)\bigg)\exp\left(\frac{a+v_t}{a}\right) + a = 0.
\end{align}
By setting $\left(\log\left(\frac{w_{k,t}+b}{w_{k,t}}\right)-\left(\frac{a+v_t}{a}\right)\right) = X$, we can obtain
\begin{align}\label{eqn:C_2_5}
    X\exp(X) = -\exp\left(-\frac{a+v_t}{a}\right).
\end{align}
Then the principal branch of the Lambert W function $\mathcal{W}$ is introduced to further simplify \eqref{eqn:C_2_5} as
\begin{align}\label{eqn:C_3}
    X = \mathcal{W}\left(-\exp\left(-\frac{a+v_t}{a}\right)\right).
\end{align}
By Substituting $\alpha_{k,t}\beta_{k,t}W = a$, $\frac{P_kh_{k,t}}{WN_0} = b$ and $\left(\log\left(\frac{w_{k,t}+b}{w_{k,t}}\right)-\left(\frac{a+v_t}{a}\right)\right) = X$ into \eqref{eqn:C_3}, the close form solution of $\widetilde{w}_{k,t}$ can be given by
\begin{align}
    \widetilde{w}_{k,t} = \frac{P_k h_{k,t}}{W N_0 \exp[\mathcal{W}(-\exp(-A_{k,t}))+A_{k,t}]-WN_0},
\end{align}
where $A_{k,t}=\frac{\alpha_{k,t}\beta_{k,t}W+v_t}{\alpha_{k,t}\beta_{k,t}W}$ and considering the constraint $0\leq w_{k,t} \leq 1$, we can obtain the optimal $w_{k,t}^*$ as \eqref{eqn:optimal_w}, which completes the proof.

\bibliographystyle{IEEEtran}
\bibliography{IEEEabrv,link}

\end{document}